\documentclass[10pt]{article}

\usepackage[preprint]{tmlr}
\usepackage{hyperref}
\usepackage{url}
\usepackage{amsmath,amssymb}
\usepackage{amsthm}
\newtheorem{proposition}{Proposition}
\newtheorem{corollary}[proposition]{Corollary}
\usepackage{booktabs}
\usepackage{array}
\usepackage{multirow}
\usepackage{graphicx}

\title{A Symmetry-Matching Approach to Blind-Spot Reduction\\
in Sparse Autoencoders}

\newcommand{\openreview}{\url{https://openreview.net/forum?id=NWWpKC9CZH}}

\author{\name Csaba Balogh \email balogh.csaba@okfo.gov.hu \\
      \addr National Directorate General for Hospitals, Budapest, Hungary \\
      \addr Data Science in Health Programme, Semmelweis University,
      Budapest, Hungary \\[1ex]
      {\normalfont\small Published in Transactions on Machine Learning Research (08/2026).} \\
      {\normalfont\small\textbf{OpenReview:} \openreview}}

\begin{document}

\maketitle

%=======================================================================
\begin{abstract}
Language models can treat semantically distinct inputs as interchangeable
at the representation level, creating blind spots that standard sparse
autoencoder (SAE) objectives do not target. In safety-critical
settings such as clinical dosage extraction, legal clause interpretation,
or financial amount verification, such failures can propagate silently
into downstream decisions. We study this problem as one of feature-basis
orientation. Building on a symmetry-matching view from algebraic
error-detection theory, we add a differentiable $V_{\mathrm{Gini}}$
regularisation term to SAE training that penalises uneven perturbation
response within a specified family.
Across GPT-2, Gemma~2, and Qwen~2.5, a frozen multi-seed evaluation shows
that $V$-regularisation raises both relative and absolute lower-tail
responses while reducing response-profile imbalance, with no material
general-text reconstruction degradation. JumpReLU and minimum description length (MDL)
objectives remain close to the Standard SAE profile under the same
protocol. Family-geometry controls show that the effect is not explained
by template duplication or a single dominant perturbation family, and
held-out probes plus null-calibrated separation tests evaluate how far the
representation-level change carries to downstream discriminability. A
zero-shot audit on out-of-distribution clinical radiology minimal pairs
finds the relative weak-tail lift in all evaluated model-by-family cells,
not perfectly rank-aligned with the trained statistic, and a held-out
family audit finds it on family kinds never used in training.
Together, the results show that
perturbation-aware SAE training can reorient feature bases toward
safety-relevant input changes while leaving downstream validation as a
separate empirical step.
\end{abstract}

%=======================================================================
\section{Introduction}
\label{sec:intro}

Consider a clinical pipeline that must preserve the distinction between
``the patient received 3~doses'' and ``the patient received 8~doses''.
In the learned feature space of a sparse autoencoder, such a numeric
change can land in the weak lower tail of the perturbation-response
profile, so the code moves very little for exactly the pairs where the
change matters. For the dose value itself this weakness stays hidden,
because the digit sits on the surface of the text and any downstream
readout can recover it directly. The same weak-tail geometry has no such
safety net for distinctions that no single surface token carries, such as
negation, severity, or laterality, and there it can translate into
degraded downstream access. This is not primarily a failure of knowledge.
It is a failure of orientation in the feature space. The relevant
distinction may be present in the hidden state, but poorly arranged in
the feature basis, so that the perturbation induces too little
code-space separation to support reliable access.

The same algebraic phenomenon appears in a seemingly unrelated domain. A
decimal checksum that uses weighted modular arithmetic cannot detect
certain adjacent or jump transpositions, because the arithmetic makes
those transpositions invisible. The Hungarian patient identifier (TAJ)
and several US healthcare identifiers (NPI, DEA) are built on exactly such
checksums and inherit this blind spot \citep{b_general}. The fix in both
cases is not more redundancy but better orientation of the existing
encoding.

The need for systematic perturbation-awareness testing in clinical AI is
increasingly recognised. \citet{b_moradi} demonstrated that
state-of-the-art clinical NLP models degrade under minor input
perturbations that simulate real-world noise in clinical text, and the
SemEval NLI4CT shared task \citep{b_nli4ct} attracted over 100
participants to evaluate biomedical inference robustness under controlled
perturbations. The major AI developers acknowledge the problem.
OpenAI's safety evaluation framework explicitly aims to ``protect against
blind spots'' through third-party assessments \citep{b_openai_safety}, and
Anthropic's interpretability programme has shown that SAEs can recover
safety-relevant features while also noting that the extracted feature set
is incomplete \citep{b_templeton}. A joint
Anthropic--OpenAI alignment evaluation exercise in 2025
\citep{b_anthropic_openai} evaluated public models from both developers
for harmful-action propensities in simulated settings and highlighted the
value of cross-developer testing for surfacing evaluation blind spots. The
present paper provides a family-specific,
differentiable measure of within-family response-profile imbalance, embeds
it in a broader perturbation-response hierarchy for evaluating perturbation
awareness, and turns that component into an SAE training signal.

This paper connects the clinical, algebraic, and interpretability domains
and argues that reliable artificial intelligence requires two independent
conditions. The first is \emph{knowledge}, addressed by the scaling-laws
literature \citep{b_kaplan,b_hoffmann} through larger models and more data.
The second is \emph{perturbation awareness}, the ability to respond to
input changes that matter for the task. We start from the
symmetry-matching condition $\operatorname{Stab}(G,F) \cap E = \{e\}$ of
the algebraic framework for error-detecting codes over finite alphabets
\citep{b_general}, and Section~\ref{sec:measure} gives a self-contained
account of the condition and its continuous adaptation. In continuous SAE
representations the response profile has more structure than a binary
visible/invisible distinction, so we evaluate response magnitude,
lower-tail behaviour, and the Gini imbalance of the finite response
profile.

We add $V_{\mathrm{Gini}}$ as a differentiable regularisation term to the
sparse autoencoder (SAE) training objective and compare the result against
three alternative SAE objectives (standard $L_1$, JumpReLU
\citep{b_jumprelu}, and MDL (minimum description length) \citep{b_mdl})
on three language models of different scale and architecture. We then
extend the evaluation from six perturbation families to sixteen, add
family-geometry controls, and test whether
representation-level changes transfer to held-out and null-calibrated
probes.

\paragraph{Contributions.}
(i) We make representational blind spots measurable and localisable by
defining a perturbation-response hierarchy for continuous SAE
representations, connecting global exact invariance, sampled
zero-response sets, low-response regions,
lower-tail response magnitudes, and finite response-profile imbalance.
This makes perturbation awareness measurable across the hierarchy and
response-profile imbalance directly optimisable
(\S\ref{sec:measure}).
(ii) We introduce $V_{\mathrm{Gini}}$ regularisation as a differentiable
perturbation-sensitivity conditioning term for SAE training
(\S\ref{sec:training}).
(iii) We evaluate the method under a frozen multi-seed protocol and show
that it raises lower-tail perturbation responses while reducing
sensitivity imbalance (\S\ref{sec:results}).
(iv) We show that three standard SAE objectives ($L_1$, JumpReLU, and
MDL) do not reduce perturbation-response imbalance. All three leave the
profile close to the unregularised Standard SAE, establishing that
perturbation awareness is not captured by reconstruction, sparsity, or
description-length pressure. We add family-geometry, held-out, and
null-calibrated validation checks
(\S\ref{sec:results}--\S\ref{sec:discussion}).
(v) We use a unified extraction protocol to ensure that perturbation
responses are compared at matched sequence positions across GPT-2,
Gemma~2, and Qwen~2.5.

\paragraph{Related work.}
Sparse autoencoders have become a widely used tool for decomposing
language-model activations into interpretable features
\citep{b_bricken,b_cunningham,b_templeton,b_gao}, building on earlier work
on circuits and superposition \citep{b_olah,b_elhage}. A growing literature
questions whether the recovered features are canonical units of analysis
\citep{b_noncanonical,b_unified,b_fa}. This work is
orthogonal to that debate. Rather than asking whether the features are the
``right'' ones, we ask whether the basis they span is oriented so that
task-relevant perturbations are uniformly visible. The adversarial
robustness literature \citep{b_goodfellow,b_madry} measures the
\emph{magnitude} of perturbation needed to change a model's output; the sampled response profile instead identifies the
perturbations to which a representation under-responds, while
$V_{\mathrm{Gini}}$ summarises their within-family imbalance, complementing the geometric/equivariance
view of deep networks \citep{b_bronstein}.

%=======================================================================
\section{Blind sets, perturbation responses, and sensitivity imbalance}
\label{sec:measure}

In this paper, a \emph{representational blind spot} is a task-relevant
input change to which the encoding gives insufficient response under a
stated evaluation criterion. We use \emph{perturbation awareness} for the
graded capacity of an encoding to respond to such task-relevant changes,
evaluated across the hierarchy below rather than by any single quantity.
Global stabiliser membership is the exact invariance case, while sampled
zero responses, margin sets, and lower-tail magnitudes provide operational
response levels. Response-profile imbalance, measured by
$V_{\mathrm{Gini}}$, is a property of a family of such responses rather
than of a single input change, and downstream accessibility is evaluated
separately.

%-----------------------------------------------------------------------
\subsection{The symmetry-matching condition}
\label{sec:measure-stab}

Every encoding has symmetries, meaning input transformations that leave
the encoded state unchanged. In a checksum these are undetected
transcription errors; in a neural representation they are input changes
that the representation does not separate. We recall the algebraic
condition that makes this notion precise and then adapt it to a continuous
encoder.

Following \citet{b_general}, let $G$ be a group acting on the input space
$X$, with the action written $g\cdot x$ for $g\in G$ and $x\in X$. The
elements of $G$ are the input transformations available to the channel.
Let $F\colon X\to Z$ be the encoding, here the sparse-autoencoder encoder
that maps a hidden state to its code, and let $E\subseteq G$ be the family
of task-relevant transformations, the changes that a downstream task
requires the representation to track. The stabiliser of the encoding is
the set of transformations under which the code does not move,
\begin{equation}
\operatorname{Stab}(G,F)
=
\lbrace g\in G \mid F(g\cdot x)=F(x)\ \text{on the evaluated domain}\rbrace.
\label{eq:stab_def}
\end{equation}
A transformation in $\operatorname{Stab}(G,F)$ is invisible to the
encoding. The symmetry-matching condition requires that no task-relevant
transformation be invisible,
\begin{equation}
\operatorname{Stab}(G,F)\cap E=\lbrace e\rbrace,
\label{eq:stab_condition}
\end{equation}
where $e$ is the identity transformation. When the intersection contains a
non-identity element, that element is a task-relevant change the encoding
cannot see, and it is the exact global case of a
representational blind spot. In the algebraic source of
Eq.~\eqref{eq:stab_condition} the group is finite and the stabiliser is a
finite set of sequence transformations, so the condition is decidable by
enumeration. The remainder of this section carries the same condition into
the continuous setting, where $F$ is a trained encoder and exact
invariance is the boundary case of a graded response.

The experiments in this paper do not test every transformation on the
whole evaluated domain. They use the finite observed pair family
\[
E_{\mathrm{obs}}
:=
\lbrace (x_i,g_i\cdot x_i)\rbrace_{i=1}^{K},
\qquad g_i\in E.
\]
Every quantity below is defined on these sampled pairs. The global
algebraic condition~\eqref{eq:stab_condition} and its sampled analogue
must therefore be kept distinct.

%-----------------------------------------------------------------------
\subsection{From exact invariance to a response profile}
\label{sec:measure-profile}

For each sampled pair $i$, with codes $z_i^{\mathrm{orig}}=F(x_i)$ and
$z_i^{\mathrm{pert}}=F(g_i\cdot x_i)$, define the relative SAE-code
response
\begin{equation}
D_i =
\frac{\|z_i^{\mathrm{pert}} - z_i^{\mathrm{orig}}\|}
     {\|z_i^{\mathrm{orig}}\|+\varepsilon}.
\label{eq:response}
\end{equation}
The response $D_i$ measures how far the code moves under the applied
transformation, normalised by the code norm so that the quantity is
comparable across pairs. The exact sampled zero-response set is
\begin{equation}
S_0(F,E_{\mathrm{obs}})
=
\lbrace i \mid F(g_i\cdot x_i)=F(x_i)\rbrace
=
\lbrace i \mid D_i=0\rbrace.
\label{eq:exact_blind}
\end{equation}
This is a pointwise sampled analogue of the global algebraic stabiliser
condition. If $g\in\operatorname{Stab}(G,F)$, every evaluated pair
generated by $g$ has zero response. The converse need not follow from a
single sampled equality $F(g_i\cdot x_i)=F(x_i)$, which is only a
pointwise collision.

In continuous learned representations, exact equality is too brittle to
serve as the operational endpoint: the empirical question is whether the
response is sufficiently small, not whether two floating-point code
vectors are exactly identical. We therefore retain the exact sampled set
as the exact-response boundary case corresponding to the algebraic notion,
and use margin and lower-tail quantities for operational evaluation. Two
graded quantities implement this operational level. For a fixed response
margin $\gamma$, the low-response margin set collects the pairs with
response below the margin,
\begin{equation}
S_\gamma(F,E_{\mathrm{obs}})=\lbrace i \mid D_i<\gamma\rbrace,
\label{eq:approx_blind}
\end{equation}
and the coverage statistic reports the mass above the margin,
\begin{equation}
C_\gamma(F,E_{\mathrm{obs}})
=\frac{1}{K}\sum_{i=1}^{K}\mathbf{1}[D_i\geq \gamma].
\label{eq:coverage}
\end{equation}
As $\gamma\to 0$ the set $S_\gamma$ contracts to $S_0$, so the margin
version is a continuous relaxation of the sampled zero-response set, with
$\gamma$
setting the detection scale. For margin-free summaries we use the lower
tail $L_{20}(D)$, the mean of the smallest quintile of responses, and the
absolute code displacement lower tail $L_{20}(\|\Delta z\|)$. The
lower-tail quantities report the magnitude of the weakest responses,
which captures the operational weak-response
regime relevant to blind-spot analysis. For a finite evaluated family
this interpretation is exact: $L_{20}(D)$ is the minimum, over all
subsets having lower-quintile cardinality, of their mean response
(Appendix~\ref{app:lower-tail-guarantee}).

%-----------------------------------------------------------------------
\subsection{The Gini imbalance of the response profile}
\label{sec:measure-gini}

The margin set and the lower tail measure the size of weak responses. The
regulariser acts on a complementary property, the inequality of the
response profile across the family. For a family of responses
$(D_1,\ldots,D_K)$ with mean $\bar D=K^{-1}\sum_i D_i$, define the
imbalance as the mean absolute pairwise difference of responses,
normalised by twice the mean,
\begin{equation}
V_{\mathrm{Gini}}(D)=
\frac{1}{2K^2\bar D}\sum_{i=1}^{K}\sum_{j=1}^{K}\lvert D_i-D_j\rvert.
\label{eq:gini_pairwise}
\end{equation}
This form states directly what the metric measures. The numerator sums
the response gap over every pair of transformations in the family, so it
is large when a few directions absorb most of the response and the rest
are weak, and it is zero when every direction responds equally. The
normalisation by $\bar D$ removes the response scale, so the metric
reports the shape of the profile rather than its size.

Equation~\eqref{eq:gini_pairwise} is the Gini coefficient of the response
profile, and it has an equivalent sorted form that is cheaper to evaluate.
Writing $D_{(1)}\leq\cdots\leq D_{(K)}$ for the sorted responses,
\begin{equation}
V_{\mathrm{Gini}}(D)
=
\frac{2\sum_{i=1}^{K} i\,D_{(i)}}{K\sum_{i=1}^{K} D_{(i)}}
-
\frac{K+1}{K}.
\label{eq:gini}
\end{equation}
The two forms are equal for every nonnegative profile. The sorted form is
used in training because it costs one sort rather than $K^2$ pairwise
differences, and its two terms are not a design choice. They follow from
the pairwise numerator by counting how often each response appears. Once
the responses are sorted, the ordered value $D_{(i)}$ is larger than the
$i-1$ responses below it and smaller than the $K-i$ responses above it.
In the one-sided sum over unordered pairs its signed coefficient is
$(i-1)-(K-i)=2i-(K+1)$. The double sum in
Eq.~\eqref{eq:gini_pairwise} counts every unordered pair twice, hence
\[
\sum_{i=1}^{K}\sum_{j=1}^{K}\lvert D_i-D_j\rvert
=2\sum_{i=1}^{K}\bigl(2i-(K+1)\bigr)D_{(i)}
=4\sum_{i=1}^{K} i\,D_{(i)}
-2(K+1)\sum_{i=1}^{K} D_{(i)}.
\]
Dividing by $2K\sum_i D_{(i)}$ produces the two terms of
Eq.~\eqref{eq:gini}. The first term is therefore the rank-weighted mean
of the sorted responses, which grows when large responses sit at high
ranks and small responses at low ranks, that is when the profile is
unequal. The second term $\tfrac{K+1}{K}$ is the value the first term
takes on a perfectly equal profile, so the subtraction sets the metric to
zero at equality and lets it grow toward one as the profile concentrates.
The doubled signed-rank coefficient is the reason the metric takes a
difference of two terms rather than a single sum.
Appendix~\ref{app:diff} records the boundary behaviour of the identity
and the differentiable surrogate used for training.

Two properties fix the interpretation. First, $V_{\mathrm{Gini}}(D)=0$ if
and only if all responses are equal, and it approaches its maximum when
the response concentrates on a single transformation, so it is a measure
of concentration, not of magnitude. Second, it is invariant to common
rescaling of the profile,
\begin{equation}
V_{\mathrm{Gini}}(cD_1,\ldots,cD_K)
=
V_{\mathrm{Gini}}(D_1,\ldots,D_K)
\quad\text{for all } c>0,
\label{eq:gini_scale}
\end{equation}
which follows because the factor $c$ cancels between numerator and
denominator in Eq.~\eqref{eq:gini_pairwise}. A representation can
therefore have a low $V_{\mathrm{Gini}}$ and still respond weakly in
absolute terms if every response is uniformly small. The scale-invariant
statistic is the training target because it is the quantity a
differentiable objective can shape, and the lower-tail quantities
$L_{20}(D)$ and $L_{20}(\|\Delta z\|)$ carry the magnitude information
that Eq.~\eqref{eq:gini_scale} discards. We report the two together
throughout, and we do not read a low $V_{\mathrm{Gini}}$ on its own as
evidence of detectability. For calibrated separation checks, we report
area under the receiver operating characteristic curve (AUROC). The
hierarchy is summarised in Table~\ref{tab:metric_hierarchy}.

\begin{table}[t]
\centering
\caption{Hierarchy of perturbation-response quantities used in this paper.}
\label{tab:metric_hierarchy}
\small
\begin{tabular}{@{}lll@{}}
\toprule
Object & Measures & Role \\
\midrule
$\operatorname{Stab}(G,F)\cap E=\lbrace e\rbrace$ & Absence of global task-relevant invariance & Algebraic reference condition \\
$D_i$ & Per-pair relative code displacement & Base sampled response \\
$S_0$ & Exact sampled zero-response set & Sampled exact-response boundary \\
$S_\gamma$ & Low-response margin set & Operational response threshold \\
$C_\gamma$ & Coverage above a fixed margin & Mass above a chosen threshold \\
$V_{\mathrm{Gini}}$ & Response-profile imbalance & SAE training objective \\
$L_{20}(D)$ & Lower-tail relative response & Weak-response lifting \\
$L_{20}(\|\Delta z\|)$ & Lower-tail absolute code displacement & Scale-sensitive support \\
Held-out probes & Discriminability evidence & Downstream transfer check \\
Null-calibrated AUROC & Separation from nuisance controls & Calibrated representation check \\
\bottomrule
\end{tabular}
\end{table}

%=======================================================================
\section{\texorpdfstring{$V$}{V}-regularised SAE training}
\label{sec:training}

The augmented training loss is
\begin{equation}
\mathcal{L} = \|h - \hat{h}\|^2
  + \lambda_1 \|z\|_1
  + \lambda_2\,\bar{V}_{\mathrm{Gini}}(z, E),
\label{eq:loss}
\end{equation}
where $\bar{V}_{\mathrm{Gini}}$ averages over sampled perturbation
families. Perturbation hidden states are pre-cached (one forward pass
through the frozen language model per perturbation pair) so that the
$V$ term adds negligible training cost and does not introduce stochastic
variation across steps (Appendix~\ref{app:lambda}). The base model's
weights remain frozen; only the SAE feature basis is modified.

%=======================================================================
\section{Experimental setup}
\label{sec:setup}

\paragraph{Models.}
GPT-2 small \citep{b_radford} (124M parameters, hidden dim 768,
layer~12), Gemma~2 2B \citep{b_gemma} (2.6B, hidden dim 2{,}304,
layer~13), and Qwen~2.5 3B \citep{b_qwen} (3B, hidden dim 2{,}048,
layer~18). All SAEs use $d_{\mathrm{sae}} = 4{,}096$ with ReLU activation
(standard and $V$-regularised variants), JumpReLU activation (JumpReLU
variant), or ReLU with an MDL penalty (MDL variant).

\paragraph{Hidden-state extraction.}
For decoder-only models with different padding conventions, hidden states
are extracted using a true-last-token rule computed from the attention
mask. This ensures that perturbation responses are compared at the final
non-padding token position across GPT-2, Gemma~2, and Qwen~2.5.
Reconstruction quality is measured on OpenWebText (OWT) hidden-state
samples as the normalised mean squared error (NMSE) between the SAE input
and its reconstruction. When raw reconstruction MSE is reported, it is
labelled explicitly.

\paragraph{Perturbation families.}
We evaluate six task-relevant perturbation families. Some are adapted
from classical edit-error taxonomies \citep{b_damerau,b_pollock}, while
others test semantic substitutions, negation, synonym changes, and
numeric swaps. The six families are semantic substitution, typo,
negation, synonym, number swap, and word order. Each contains 50
sentence pairs. For the generalisation study (\S\ref{sec:res-joint16}) we
add ten auto-generated medical-domain families (anatomical direction,
body-part swap, causal reversal, condition negation, date/time change,
drug-name swap, frequency change, gender swap, severity change, unit of
measure) with 30 pairs each, giving sixteen families and 600 pairs total.
A further pre-constructed pool of one hundred families with thirty fixed
pairs each, of which eighty-four were never used in training, supports the
held-out family audit (\S\ref{sec:res-e3}, Appendix~\ref{app:unseen}).
Representative examples for all sixteen families are listed in
Appendix~\ref{app:families}, and the complete datasets are provided as
supplementary material (see \nameref{sec:repro}).

\paragraph{Training protocol.}
All variants share $\lambda_1 = 10^{-3}$, learning rate $3 \times
10^{-4}$, batch size 64, 5{,}000 steps, and the Adam optimiser. The
$V$-regularised variants use $\lambda_2 = 0.1$, sampling 3 families per
step with 8 pairs each from the pre-cached perturbation hidden states.
The sixteen-family ``joint'' runs use the script \texttt{train\_joint.py}
with all sixteen families in the $V$ loss at every step, $\lambda_2=0.2$,
and 7{,}500 steps. All experiments run on a single Apple M-series GPU.

%=======================================================================
\section{Results}
\label{sec:results}

\subsection{Multi-seed evaluation across three model families}
\label{sec:res-e7}

We first report a frozen multi-seed stability evaluation. It uses
true-last-token perturbation caches for all three models. Table~\ref{tab:e7}
reports seed-aggregated deltas for Standard versus $V$-regularised SAEs,
and Figure~\ref{fig:response-cdf} visualises how the weakest Standard
responses move under $V$-regularisation.
With this hierarchy in place, we interpret a lower Gini value together
with the movement of the weak-response tail. Thus, we report the decrease
in $V_{\mathrm{Gini}}$ alongside positive $\Delta L_{20}(D)$ and positive
$\Delta L_{20}(\|\Delta z\|)$.
Across GPT-2, Gemma~2, and Qwen~2.5, $V$-regularisation raises lower-tail
relative response and lower-tail absolute code displacement while reducing
response-profile imbalance. The positive-family fraction is 1.0 in all
three models, so the effect is not driven by a small subset of families.

\begin{table}[t]
\centering
\caption{Frozen multi-seed Standard versus $V$-regularised SAE comparison.
Values are mean $\pm$ standard deviation across three seeds. Positive
$\Delta L_{20}(D)$ and $\Delta L_{20}(\|\Delta z\|)$ indicate lower-tail
response lifting; negative $\Delta V_{\mathrm{Gini}}$ indicates reduced
response-profile imbalance. The OWT reconstruction change is reported as
NMSE difference.}
\label{tab:e7}
\begin{center}
\small
\begin{tabular}{@{}lccccc@{}}
\toprule
Model & $\Delta L_{20}(D)$ & $\Delta L_{20}(\|\Delta z\|)$ &
$\Delta V_{\mathrm{Gini}}$ & Positive families & $\Delta$OWT NMSE \\
\midrule
GPT-2 & $+0.0562 \pm 0.0010$ & $+5.3584 \pm 0.1202$ &
$-0.2919 \pm 0.0008$ & $1.0000 \pm 0.0000$ & $+0.0000 \pm 0.0000$ \\
Gemma~2 & $+0.1301 \pm 0.0042$ & $+10.3454 \pm 1.1553$ &
$-0.2322 \pm 0.0001$ & $1.0000 \pm 0.0000$ & $-0.0004 \pm 0.0007$ \\
Qwen~2.5 & $+0.1316 \pm 0.0006$ & $+2.4650 \pm 0.1129$ &
$-0.2432 \pm 0.0001$ & $1.0000 \pm 0.0000$ & $+0.0006 \pm 0.0011$ \\
\bottomrule
\end{tabular}
\end{center}
\end{table}

\begin{figure}[t]
\centering
\includegraphics[width=\linewidth]{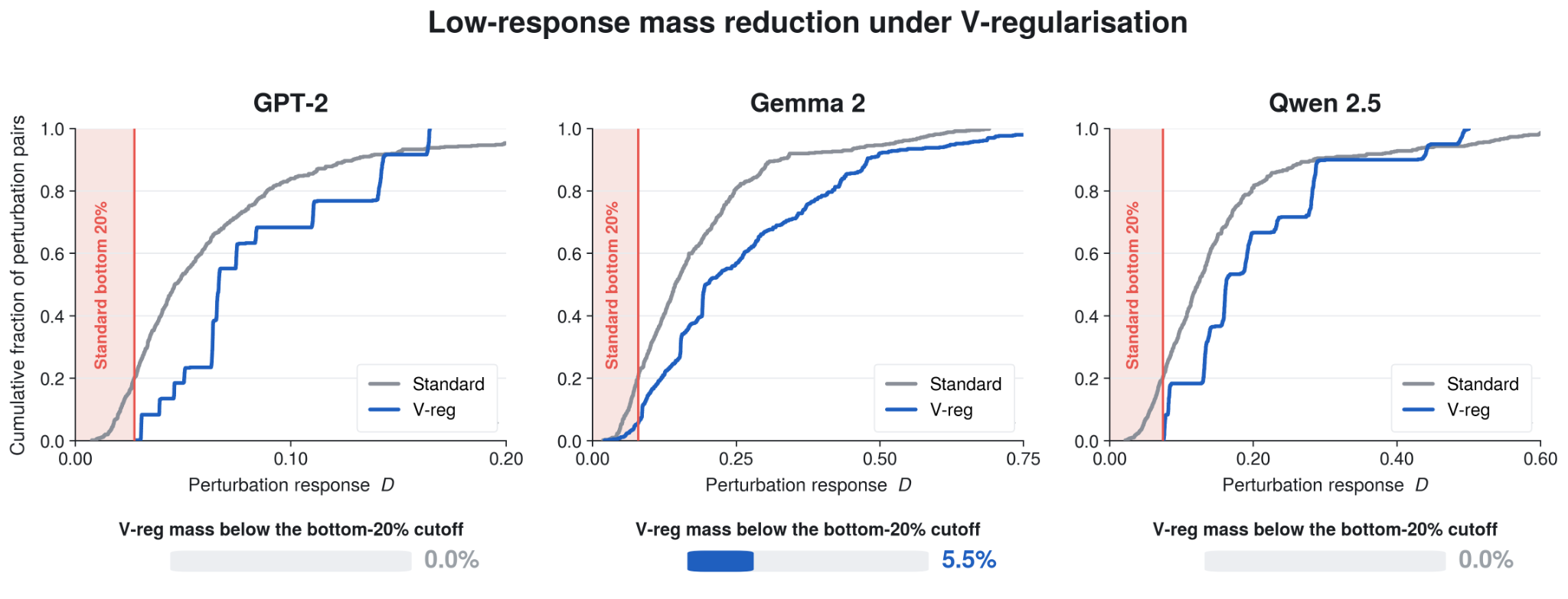}
\caption{Empirical cumulative distribution functions (CDFs) of per-pair
perturbation response $D$ over all
sixteen perturbation families. For each model, the red line marks the 20th
percentile of the Standard SAE response distribution, defining a
model-specific low-response region. Lower CDF values to the left of the
cutoff indicate fewer perturbation pairs remaining in the weak-response
region under $V$-regularisation. The step structure of the V-reg curves
reflects the resolution of each model's code space. GPT-2 small concentrates pairs on
fewer discrete response levels, producing visible steps, while the
higher-dimensional Gemma~2 spreads responses almost continuously.}
\label{fig:response-cdf}
\end{figure}

Figure~\ref{fig:response-cdf} gives the same result at the distributional
level. $V$-regularisation shifts the empirical response distribution to the
right, removing all or most perturbation pairs from the Standard
bottom-20\% response region. The remaining mass falls from 20\% by
construction to 0.0\% on GPT-2, 5.5\% on Gemma~2, and 0.0\% on Qwen~2.5.
Thus, the primary result is a three-model response profile effect,
lower-tail lifting together with reduced within-family imbalance and no
material OWT reconstruction degradation.

\subsection{Lower-tail response and absolute code displacement}
\label{sec:res-e1}

The absolute-response analysis in Table~\ref{tab:e1-absolute} reports the
magnitude side of the hierarchy. It gives the lower tail of relative
SAE-code response, the lower tail of absolute code displacement,
decoded-response support, and input-normalised gain on the sixteen-family
perturbation set.

\begin{table}[t]
\centering
\caption{Lower-tail response and absolute code displacement on the sixteen-family
perturbation set. Values are V-reg minus Standard for the corresponding
checkpoint pair. Positive $\Delta L_{20}(D)$ and
$\Delta L_{20}(\|\Delta z\|)$ indicate that the weak lower tail is lifted
in relative and absolute SAE-code response. The gain-normalised lower
tail records the input-complexity-normalised variant.}
\label{tab:e1-absolute}
\begin{center}
\small
\begin{tabular}{@{}lccccc@{}}
\toprule
Model & $\Delta L_{20}(D)$ & Positive families &
$\Delta L_{20}(\|\Delta z\|)$ & $\Delta L_{20}(\mathrm{decoded})$ &
$\Delta L_{20}(g)$ \\
\midrule
GPT-2 & $+0.055845$ & $16/16$ & $+5.4349$ & $+0.000368$ & $-0.159311$ \\
Gemma~2 & $+0.133151$ & $16/16$ & $+11.0187$ & $+0.067886$ & $-0.151532$ \\
Qwen~2.5 & $+0.131023$ & $16/16$ & $+2.4443$ & $+0.040904$ & $-0.107768$ \\
\bottomrule
\end{tabular}
\end{center}
\end{table}

Across all three models, the lower tail of relative response and the
lower tail of absolute code displacement increase in all sixteen families.
The decoded-response column is positive for Gemma~2 and Qwen~2.5 and close
to zero for GPT-2, so the representation-level lift has different decoded
expression across models. The input-normalised gain column separates this
main effect from a stricter gain-normalised variant.

\subsection{Family geometry and homogeneity controls}
\label{sec:res-e5}

The sixteen-family result could be misleading if it were driven by
duplicated templates, one dominant family, or unusually homogeneous
perturbation geometry. We therefore built exact-skeleton template clusters
for all 600 perturbation pairs and recomputed uncertainty with a
fixed-family template-cluster bootstrap. The collection contains 424
template clusters; 250 of 600 pairs (41.7\%) belong to repeated templates.

\begin{table}[t]
\centering
\caption{Template-cluster and family-resampled uncertainty for the
lower-tail effects. Fixed intervals resample template clusters within
each of the sixteen families. Family-resampled intervals additionally
resample families and are wider, but remain positive for the relative
lower-tail effect.}
\label{tab:e5}
\begin{center}
\small
\begin{tabular}{@{}lccc@{}}
\toprule
Model & Metric & Fixed-family interval & Family-resampled interval \\
\midrule
Gemma~2 & $\Delta L_{20}(D)$ & $+0.1332\,[+0.1256,+0.1361]$ & $[+0.0891,+0.1811]$ \\
GPT-2 & $\Delta L_{20}(D)$ & $+0.0558\,[+0.0527,+0.0567]$ & $[+0.0417,+0.0699]$ \\
Qwen~2.5 & $\Delta L_{20}(D)$ & $+0.1310\,[+0.1248,+0.1328]$ & $[+0.0868,+0.1792]$ \\
\midrule
Gemma~2 & $\Delta L_{20}(\|\Delta z\|)$ & $+11.0187\,[+9.8028,+11.7425]$ & $[+6.2071,+16.3177]$ \\
GPT-2 & $\Delta L_{20}(\|\Delta z\|)$ & $+5.4349\,[+5.1541,+5.5791]$ & $[+4.0521,+6.9244]$ \\
Qwen~2.5 & $\Delta L_{20}(\|\Delta z\|)$ & $+2.4443\,[+2.1110,+2.7675]$ & $[+0.9824,+4.1303]$ \\
\bottomrule
\end{tabular}
\end{center}
\end{table}

The relative lower-tail effect remains positive under every
leave-one-template-out deletion across 424/424 template clusters for all
three models, and leave-one-family-out macro effects remain positive. The
hidden perturbation geometry is well-defined under the true-last-token
extraction rule for every model and family. GPT-2 shows a more
low-dimensional perturbation geometry than Gemma~2 or Qwen~2.5, but this
geometric difference does not remove the positive sparse-code lower-tail
effect.

\subsection{Extrinsic endpoints beyond the training objective}
\label{sec:res-e3}

The objective optimises a scale-invariant imbalance statistic, so a
reduction in $V_{\mathrm{Gini}}$ alone does not establish that
perturbations become more accessible. This subsection therefore reports
validation endpoints beyond the optimised Gini statistic. Some remain
representation-level but are scale-sensitive or evaluated on an external
domain, while the probe endpoints test downstream accessibility directly.

A null-calibrated separation test measures code-space distance against
meaning-preserving nuisance controls rather than a zero-distance null. The
primary endpoint is the family-macro AUROC difference between
$V$-regularised and Standard SAEs on absolute code distance
(Table~\ref{tab:e2}). Gemma~2 and Qwen~2.5 show positive separation under
the fixed-family interval, whereas GPT-2 is uncertain in point direction.
The endpoint is a separation task the objective never sees, so it is
external to the framework that defines the training target.

Held-out probes test whether the representation-level change reaches a
semantic distinction on templates unseen during training. On the
held-out negation probe the distinction is fully present in the Qwen~2.5
hidden state at an AUROC of $1.00$, the Standard SAE code loses part of it
at $0.85$, and the $V$-regularised code recovers it to $0.91$. This is the
accessibility pattern the weak-tail signature predicts, closed on an
endpoint outside the objective, and it appears where the Standard code has
room to improve. On held-out negation templates the paired lower-tail
probability-margin point estimates also improve for all three models,
rising from 0.0052 to 0.3627 on GPT-2, from 0.1590 to 0.3646 on Gemma~2,
and from 0.0846 to 0.1440 on Qwen~2.5, with family-stratified bootstrap
intervals that include zero, so the margin version is suggestive rather
than uniform. A nested, template-grouped five-fold cross-fit on all 78
Qwen negation pairs further separates the Standard-defined weak quintile
from the remaining pairs. The Standard probability margin is lower on the
weak set, $0.302$ versus $0.590$, with a template-bootstrap
weak-minus-non-weak interval of $[-0.509,-0.066]$. On that fixed weak set
the Standard-to-$V$-regularised margin change is $+0.055$
[$-0.087,+0.257$], and correctness changes from $15/16$ to $16/16$;
the interval and exact McNemar test ($p=1.00$) do not resolve an
intervention effect. The weak/non-weak association is descriptive because
Standard $D_i$ and Standard margin share a representation source.
Appendix~\ref{app:lower-tail-guarantee} reports both subsets.

A real-clinical audit evaluates the frozen general checkpoints zero-shot
on OpenI chest radiography minimal pairs in four clinical families,
negation, laterality, severity, and anatomical direction. The lower-tail
response $L_{20}(D)$ rises in all eight model-by-family cells, with
report-clustered intervals excluding zero in each, at macro means of
$+0.0023$ on GPT-2 and $+0.0037$ on Qwen~2.5. The audit covers GPT-2 and
Qwen~2.5; Gemma~2 enters the external OpenI evaluation through the
laterality probe below and through the held-out family audit of
Appendix~\ref{app:unseen}, where every Gemma~2 family is individually
interval-positive. On this data the lower-tail lift and the trained
inequality statistic are not perfectly rank-aligned. The largest Qwen lift,
negation at $+0.0099$, accompanies a $V_{\mathrm{Gini}}$ change of only
$-0.003$, while the largest GPT-2 inequality drop, anatomical direction at
$-0.056$, accompanies the smallest lift at $+0.0008$. Because
$V_{\mathrm{Gini}}$ is scale-invariant by Eq.~\eqref{eq:gini_scale}, the
regulariser directly targets profile shape, while changes in mean or
absolute response are not themselves rewarded by the objective; parameter
updates can nevertheless change them indirectly. Appendix~
\ref{app:lower-tail-guarantee} gives the exact interpretation and paired
panel, and Appendix~\ref{app:casestudy} gives the own-tail case study.

Two controlled designs isolate the mechanism. A dedicated paired
experiment trains Standard and $V$-regularised SAEs from identical
initialisation and an identical reconstruction minibatch stream, differing
only in the $V$ term. The regulariser drives the trained family's response
inequality to near zero, from $0.233$ to $0.0009$ on GPT-2 and from
$0.113$ to $0.0008$ on Qwen~2.5, with no measurable reconstruction cost on
held OpenWebText tokens, where explained variance is unchanged on GPT-2
and rises from $0.995$ to $0.996$ on Qwen~2.5
(Appendix~\ref{app:dedicated}). A controlled synthetic experiment then
places the task in a non-saturated regime by construction. There the
$V$-regularised code yields a modest held-out probe gain of $+0.016$ in
AUROC together with a clear lower-tail displacement gain of $+0.51$, and
the probe gain shrinks as the Standard code approaches ceiling, matching
the headroom pattern of the real models (Appendix~\ref{app:toy}).

External OpenI laterality probes give a stricter boundary check. On
left/right minimal pairs, the lower-tail probability-margin deltas are
small, at $+0.0022$ for GPT-2, $-0.0747$ for Gemma~2, and $+0.0141$ for
Qwen~2.5. The intervals cross zero for GPT-2 and Qwen, while the Gemma~2
delta is negative with a report-clustered interval of $[-0.163, -0.012]$
that excludes zero, a downstream cost on this saturated external task. A targeted
certificate-gated repair on the same task moves the intended laterality
direction without a measurable downstream gain, consistent with this
saturated boundary (Appendix~\ref{app:repair}). Finally, a
held-out family audit evaluates the same frozen checkpoints on
seventy-two family kinds never used in training, drawn from a
pre-constructed pool and verifiable as held out from the released
training materials, and finds the lower-tail lift at the same
order of magnitude, including on eighteen family kinds with no mechanical
overlap with training (Appendix~\ref{app:unseen}).

The checks give a consistent reading. $V$-regularisation improves the
geometry of weak response directions on endpoints outside the objective,
and transfer to downstream probability margins depends on the task and
model, appearing where the Standard code has headroom.

\begin{table}[t]
\centering
\caption{Null-calibrated separation. Family-macro AUROC difference
(V-reg minus Standard) for absolute SAE-code distance against
meaning-preserving nuisance controls. Fixed-family intervals resample
template clusters within each family; family-resampled intervals also
resample families. Intervals are 95\% confidence intervals (CIs). Gemma~2
and Qwen~2.5 are positive under the fixed-family interval; GPT-2 is
uncertain in point direction.}
\label{tab:e2}
\begin{center}
\small
\begin{tabular}{@{}lccc@{}}
\toprule
Model & $\Delta$AUROC (abs.\ code) & Fixed-family 95\% CI &
Family-resampled 95\% CI \\
\midrule
Gemma~2 & $+0.037$ & $[+0.013, +0.063]$ & $[-0.014, +0.087]$ \\
Qwen~2.5 & $+0.048$ & $[+0.024, +0.077]$ & $[+0.007, +0.091]$ \\
GPT-2 & $-0.010$ & $[-0.049, +0.024]$ & $[-0.078, +0.064]$ \\
\bottomrule
\end{tabular}
\end{center}
\end{table}

\subsection{Response-profile imbalance under alternative SAE objectives}
\label{sec:res-alt}

We trained five SAE variants on each of the three models and evaluated
the $V_{\mathrm{Gini}}$ of each on the six perturbation families. The five
variants are the standard SAE ($L_1$ sparsity), JumpReLU SAE (learnable
threshold with an $L_0$ proxy, \citealp{b_jumprelu}), MDL SAE
(description-length penalty, \citealp{b_mdl}), $V$-regularised SAE
($V_{\mathrm{Gini}}$ penalty on raw response), and $V$-regularised
SAE with input normalisation ($V_{\mathrm{Gini}}$ penalty on
complexity-controlled gain).

Appendix~\ref{app:objective-comparison} reports the single-run numerical
detail. On all three models, the standard, JumpReLU, and MDL SAEs produce
similar $V$ profiles; the differences among these three methods are
consistently below 0.02 across all families and models. None of them
explicitly targets perturbation awareness, and under the shared protocol
they leave both lower-tail response and response-profile imbalance close
to Standard. In that descriptive comparison, the $V$-regularised SAE
produces substantially lower response-profile imbalance than the non-$V$
objectives, so reconstruction, sparsity, and description-length objectives
do not by construction condition the perturbation-response profile.

A frozen multi-seed baseline extension confirms this separation beyond a
single run. Using the same true-last-token caches and seeds as
\S\ref{sec:res-e7}, we trained JumpReLU and MDL across three seeds and
compared them with the Standard and $V$-regularised checkpoints under the
primary metric hierarchy (Table~\ref{tab:e7b}). JumpReLU and MDL track the
Standard objective on all three models, moving $L_{20}(D)$ and
$V_{\mathrm{Gini}}$ by less than $0.001$, while $V$-regularisation raises
the lower-tail response and reduces imbalance by a far larger margin and
hits a positive-family fraction of 1.0. None of the non-perturbation-aware
baselines reproduces the joint pattern of lower-tail lifting and
imbalance reduction. Figure~\ref{fig:baseline-separation} visualises this
separation in the two quantities most directly tied to the central claim.

\begin{figure}[t]
\centering
\includegraphics[width=\linewidth]{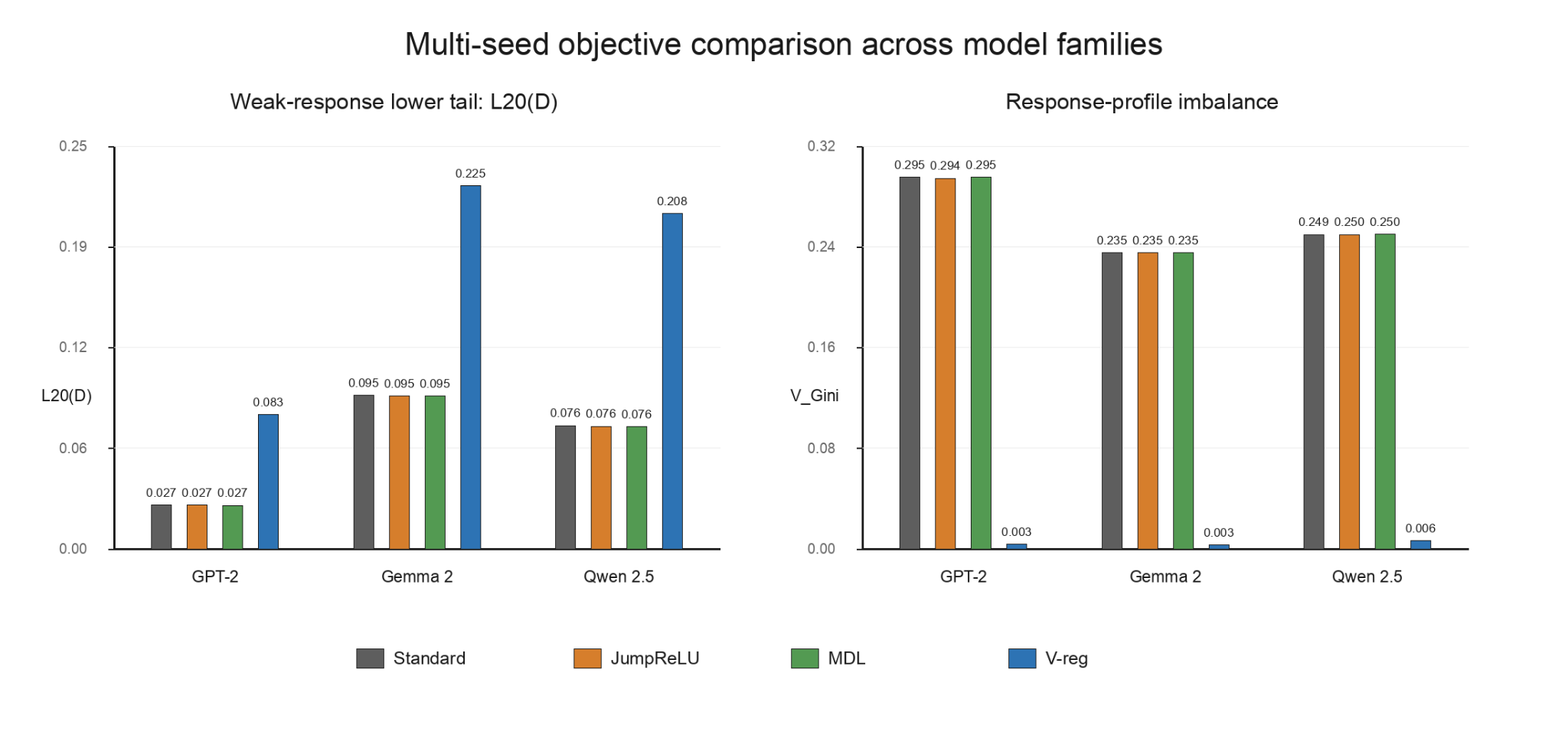}
\caption{Frozen multi-seed baseline extension across Standard, JumpReLU,
MDL, and $V$-regularised SAEs. The non-perturbation-aware objectives stay
close to Standard on lower-tail response and response-profile imbalance,
whereas $V$-regularisation separates on both axes across GPT-2, Gemma~2,
and Qwen~2.5.}
\label{fig:baseline-separation}
\end{figure}

\begin{table}[t]
\centering
\caption{Frozen multi-seed baseline extension. Deltas are each objective
minus the Standard SAE, mean $\pm$ standard deviation across three seeds,
under the same protocol and caches as Table~\ref{tab:e7}. JumpReLU and MDL
stay near Standard, while $V$-regularisation separates on every model.}
\label{tab:e7b}
\begin{center}
\small
\begin{tabular}{@{}llcccc@{}}
\toprule
Model & Objective & $\Delta L_{20}(D)$ & $\Delta L_{20}(\|\Delta z\|)$ &
$\Delta V_{\mathrm{Gini}}$ & Positive families \\
\midrule
\multirow{3}{*}{GPT-2}
& JumpReLU & $-0.0001 \pm 0.0002$ & $+0.0498 \pm 0.0026$ & $-0.0009 \pm 0.0002$ & $0.354 \pm 0.308$ \\
& MDL      & $-0.0003 \pm 0.0002$ & $+0.0315 \pm 0.0040$ & $+0.0001 \pm 0.0002$ & $0.063 \pm 0.108$ \\
& V-reg    & $+0.0562 \pm 0.0010$ & $+5.3584 \pm 0.1202$ & $-0.2919 \pm 0.0008$ & $1.000 \pm 0.000$ \\
\midrule
\multirow{3}{*}{Gemma~2}
& JumpReLU & $-0.0005 \pm 0.0001$ & $+0.1767 \pm 0.0063$ & $-0.0001 \pm 0.0001$ & $0.042 \pm 0.036$ \\
& MDL      & $-0.0006 \pm 0.0001$ & $+0.1084 \pm 0.0011$ & $-0.0001 \pm 0.0000$ & $0.042 \pm 0.072$ \\
& V-reg    & $+0.1301 \pm 0.0042$ & $+10.3454 \pm 1.1553$ & $-0.2322 \pm 0.0001$ & $1.000 \pm 0.000$ \\
\midrule
\multirow{3}{*}{Qwen~2.5}
& JumpReLU & $-0.0005 \pm 0.0001$ & $+0.1539 \pm 0.0072$ & $+0.0001 \pm 0.0001$ & $0.063 \pm 0.063$ \\
& MDL      & $-0.0007 \pm 0.0000$ & $+0.0926 \pm 0.0157$ & $+0.0003 \pm 0.0002$ & $0.042 \pm 0.036$ \\
& V-reg    & $+0.1316 \pm 0.0006$ & $+2.4650 \pm 0.1129$ & $-0.2432 \pm 0.0001$ & $1.000 \pm 0.000$ \\
\bottomrule
\end{tabular}
\end{center}
\end{table}

\subsection{Consistency across three architectures}
\label{sec:res-arch}

The three models differ in parameter count (124M to 3B), training data
(English-only to multilingual), and architecture family (OpenAI, Google,
Alibaba). The frozen multi-seed effect of \S\ref{sec:res-e7} holds for all
three, so the response-profile conditioning is not tied to a single scale,
corpus, or architecture family. Within the single-run view, Qwen~2.5
reaches the lowest post-regularisation imbalance, which motivates the
scale analysis next.

\subsection{Scale-dependence of the numeric response profile}
\label{sec:res-scale}

The number swap family is the most safety-relevant of the six, for the
clinical reason set out in \S\ref{sec:intro}. On GPT-2 (124M), the
$V$-regularised SAE reduces the number swap imbalance from 0.264 to 0.034,
on Gemma~2 (2B) from 0.287 to 0.004, and on Qwen~2.5 (3B) the
post-regularisation value is 0.005. On Gemma~2, the mean number-swap
sensitivity is essentially unchanged (0.073 to 0.075) while the lower tail
roughly doubles (0.035 to 0.074), so the reduction reflects lifted weak
directions rather than uniform suppression. These descriptive values are
consistent with the broader view that scaling can affect the information
available to the SAE, while the SAE objective affects how that information
is arranged in the feature basis.

\subsection{A single orientation for all families}
\label{sec:res-joint}

We trained single-family SAEs (each regularised on one family only) and
evaluated them on all six families. Every single-family SAE reduces
imbalance on its target but leaves the other families largely unchanged. A
joint SAE, regularised on all six families simultaneously, reaches a
low-imbalance regime across the six-family set
(Table~\ref{tab:joint}; the Qwen column uses the method-comparison
V-reg result, which regularises all six families at every step). The
minimum number of parallel feature bases needed for simultaneous
$V \approx 0$ is $m^* = 1$, meaning the six families do not conflict
algebraically and a single well-oriented basis suffices.

\begin{table}[t]
\centering
\caption{Joint $V$-regularised SAE ($\lambda_2 = 0.1$, all six families
simultaneously, 5{,}000 steps) on three models. Raw reconstruction MSE in
parentheses. Values are descriptive single-run point estimates; entries
printed as 0.000 are rounded to three decimals
(Appendix~\ref{app:nearzero}), and multi-seed variability for the primary
lower-tail endpoints is reported in Table~\ref{tab:e7}.}
\label{tab:joint}
\begin{center}
\begin{tabular}{@{}lccc@{}}
\toprule
Family & GPT-2 & Gemma 2 2B & Qwen 2.5 3B \\
\midrule
Negation      & 0.000 & 0.000 & 0.000 \\
Typo          & 0.017 & 0.000 & 0.000 \\
Synonym       & 0.019 & 0.000 & 0.002 \\
Number swap   & 0.036 & 0.000 & 0.005 \\
Semantic sub. & 0.048 & 0.000 & 0.001 \\
Word order    & 0.046 & 0.002 & 0.000 \\
\midrule
Mean          & 0.028 & 0.0003 & 0.001 \\
MSE           & (0.005) & (0.005) & (0.010) \\
\bottomrule
\end{tabular}
\end{center}
\end{table}

In the check-digit framework \citep{b_general}, this is analogous to a
permutation parameter that simultaneously detects substitutions,
transpositions, and twin errors with a single fold of the dihedral group
$D_n^m$. The value $m^* = 1$ means that the six perturbation families
tested here can coexist in one feature basis without algebraic conflict.
When new families are added, the same protocol determines whether the
enlarged set still admits $m^*=1$ or requires $m^*>1$; in the latter
case the multi-factor architecture (Appendix~\ref{app:multifactor})
provides $m$ parallel encoders at constant total capacity. The stabiliser monotonicity theorem of \citet{b_general} proves that
adding factors can only shrink the stabiliser in the algebraic setting, so
$m^*$ quantifies family compatibility for the chosen perturbation set.

\subsection{Sensitivity increase, not suppression}
\label{sec:res-sens}

A natural concern is that low $V$ might be achieved by suppressing all
sensitivity (making every pair equally invisible). The per-family control
in Appendix~\ref{app:sensitivity-control} shows the opposite. On all three
models, the mean sensitivity \emph{increases} for most families under
$V$-regularisation. On GPT-2, negation sensitivity rises by 58\% and word
order by 34\%. On Gemma~2, typo sensitivity more than triples (+219\%)
and number swap more than doubles (+154\%). On Qwen~2.5, every family
shows increased mean sensitivity. These increases show that the
profile becomes less uneven through lifted weak directions rather than
through suppression.

\subsection{Input-normalised gain controls for perturbation complexity}
\label{sec:res-gain}

The six perturbation families differ in complexity (a typo changes one
character; a semantic substitution changes a content word with variable
embedding distance). To verify that the $V$ reduction is not an artefact
of within-family complexity homogeneity, we also compute the
input-normalised gain metric (\S\ref{sec:measure}), which divides the
feature-space sensitivity by the hidden-state perturbation size. The
$V$-regularised SAE with gain normalisation (V-reg$_{\mathrm{n}}$)
achieves $V_{\mathrm{gain}}$ values that round to 0.000 on all six
families on all three models (Appendix~\ref{app:gain}), indicating very
low gain imbalance after input-complexity normalisation. This supports the
interpretation that the effect is not merely complexity homogenisation.

\subsection{Generalisation to sixteen perturbation families}
\label{sec:res-joint16}

To test whether the result is specific to the six families used above,
we extended the joint $V$-regularised training to sixteen families, the
six standard families plus ten auto-generated medical-domain families
(\S\ref{sec:setup}). All sixteen families enter the $V$ loss at every
step (script \texttt{train\_joint.py}, $\lambda_2=0.2$, 7{,}500 steps;
evaluation uses 50 pairs for each of the six original families and 30 for
each of the ten medical families).

Table~\ref{tab:joint16} reports a secondary low-imbalance summary for the
sixteen-family joint runs. On GPT-2 and Qwen, the $V$-regularisation
drives the mean $V$ across all sixteen families to a low-imbalance regime.
GPT-2 falls from 0.314 to 0.002 ($-99.3\%$, with 16/16 families below
0.05 and 15/16 below 0.005), and Qwen falls from 0.239 to 0.000
($-100\%$, with 16/16 families below 0.005).

The extension is informative because the \emph{ten additional
medical-domain families}, which were never used to design the metric or
the training protocol, are reduced at least as thoroughly as the six
original families.
On GPT-2 the ten medical families fall from a mean $V$ of 0.322 to 0.000,
marginally better than the six original families (0.302 $\to$ 0.006), and
on Qwen both groups reach 0.000 (from 0.251 and 0.218 respectively).
Clinically salient families such as \texttt{severity\_change} (GPT-2
$0.428 \to 0.000$), \texttt{frequency\_change} ($0.377 \to 0.000$),
\texttt{drug\_name\_swap} ($0.266 \to 0.000$), and
\texttt{unit\_of\_measure} ($0.296 \to 0.000$) reach low imbalance under
the joint run. Thus, the orientation generalises across perturbation type
rather than only across the families used to derive it, which is the
property a single well-oriented basis would be expected to show. The full
per-family values are reported in Appendix~\ref{app:joint16-detail}.

\begin{table}[t]
\centering
\caption{Secondary low-imbalance summary for sixteen-family joint
$V$-regularised SAE runs. ``Std SAE'' and ``V-reg'' are the mean
$V_{\mathrm{Gini}}$ over all sixteen families; ``$V{<}0.05$'' and
``$V{<}0.005$'' are descriptive low-imbalance counts. This summary covers
the two models for which the sixteen-family joint run was conducted;
Gemma~2 is evaluated under the frozen multi-seed protocol in
Table~\ref{tab:e7}. Source files
\texttt{results/joint\_eval\_\{gpt2,qwen-2.5-3b\}.json}.}
\label{tab:joint16}
\begin{center}
\begin{tabular}{@{}lcccccc@{}}
\toprule
Model & Layer & Std SAE & V-reg & Reduction & $V{<}0.05$ & $V{<}0.005$ \\
\midrule
GPT-2 small & 12 & 0.314 & \textbf{0.002} & $-99.3\%$ & 16/16 & 15/16 \\
Qwen 2.5 3B & 18 & 0.239 & \textbf{0.000} & $-100\%$  & 16/16 & 16/16 \\
\bottomrule
\end{tabular}
\end{center}
\end{table}

%=======================================================================
\section{Discussion}
\label{sec:discussion}

The central finding is that an SAE feature basis can be reoriented at
fixed capacity to redistribute perturbation sensitivity. The five variants
tested here share the same width and training data and differ only in the
training objective. Objectives that optimise reconstruction, sparsity, or
description length leave the perturbation-response profile close to the
Standard SAE. $V$-regularisation changes that profile directly.

This is the same orientation problem that motivates the checksum analogy
in the introduction. A representation can contain enough information for a
distinction to matter and still arrange its feature basis so that some
perturbation directions are weakly expressed. The multi-seed result shows
that the weak lower tail can be lifted across GPT-2, Gemma~2, and
Qwen~2.5, while the sixteen-family runs show that the same orientation can
extend to additional medical-domain perturbation families included in the
sixteen-family joint run. The family-geometry controls rule out the simplest explanations
based on repeated templates or one dominant family.

For AI development practice, the method adds a concrete
perturbation-awareness evaluation and training axis to the usual
reconstruction and sparsity criteria. A team
building a clinical-facing language model could specify safety-relevant
perturbation families such as dosage changes, negation of
contraindications, or unit-of-measure substitutions, measure the $V$
profile of the feature decomposition, and apply $V$-regularisation to
families with high imbalance. The intervention modifies only the SAE
training loss, not the base model or the deployment stack.

The practical stakes are broad. The WHO projects a global shortage of
11 million health workers by 2030 \citep{b_who}, and clinical workflows
may increasingly depend on AI-assisted pipelines with limited human
oversight. In such settings, a corrupted or adversarially altered input,
for example ``80~mg'' instead of ``8~mg'', can propagate if the internal
representation treats the two values too similarly. A $V$ profile, interpreted alongside
lower-tail and absolute-response quantities, gives developers a way to
locate perturbation families and model layers where weak responses are
concentrated. It can guide targeted SAE
regularisation, identify perturbation families for additional evaluation,
and inform perturbation-sensitive SAE retraining for the affected family.
In that role, the hierarchy contributes to safer AI
deployment by making
sensitivity gaps measurable and actionable, while downstream task
validation remains the final test of application behaviour.

The held-out, calibrated, and external probes place this representation
result in context. They show that improving the sparse-code response
profile can align with downstream separation, especially for Gemma~2 and
Qwen~2.5 under the null-calibrated code-distance test, but they also show
that probability-margin transfer is task-specific. This is why
perturbation awareness is evaluated through the perturbation-response
hierarchy in \S\ref{sec:measure}, summarised in
Table~\ref{tab:metric_hierarchy}. $V_{\mathrm{Gini}}$ controls
the shape of the response profile, lower-tail and absolute-response
metrics check that weak directions have been lifted, and downstream probes
measure task accessibility. This interpretation rests on a precondition.
$V$-regularisation reorients the feature basis but cannot create
information that the hidden state does not encode. When the relevant
distinction is not represented at the chosen layer, no change to the SAE
objective can recover it, and the limitation lies in the hidden state
rather than in the regulariser.

\paragraph{Sensitivity to the choice of perturbation family.}
The conclusions depend on the perturbation families the practitioner
specifies, and this dependence has two sides. On the training side, the
sixteen-family study and the family-geometry controls show that the
low-imbalance regime is not driven by a single dominant family or by
template duplication. On the evaluation side, the held-out family audit
(Appendix~\ref{app:unseen}) evaluates the frozen checkpoints on a
pre-constructed pool of one hundred families, of which eighty-four were
never used in training, and finds the lower-tail lift at the same order of
magnitude on unseen instances of seen kinds and on eighteen structurally
novel kinds. The lift is therefore a property of the learned basis rather
than of the specific family list, while the downstream ceiling cases show
that the choice of family still determines whether the lift has room to
translate into separability. Specifying families remains a modelling
decision, and the framework treats it as an explicit input rather than a
hidden assumption.

\paragraph{Scope and negative results.}
The effect is scoped to endpoints and models with headroom in the Standard
code. On a numeric dose-value
probe the surface token is linearly decodable and the Standard code is at
ceiling, so $V$-regularisation is slightly worse there, with a
$\Delta$AUROC between $-0.04$ and $-0.16$ across the three models
(Appendix~\ref{app:dosage}). At the
representation level the numeric family shows the same weak lower tail as
the other families and the objective lifts it, yet at the readout level
the dose value remains at ceiling because the digit is surface-decodable,
which is why a value probe is not the informative downstream test and why
the safety concern applies where this surface redundancy is absent. On
real OpenI negation under report-grouped cross-validation the Standard
code is only marginally sub-ceiling and the $V$-regularised code does not
reliably improve it, with a $\Delta$AUROC interval of
$[-0.002, +0.008]$ that includes zero, a null result. Where
the Standard code already separates a family at ceiling, a focused
single-family objective can reduce held-out linear separability rather
than improve it, from an AUROC of $1.00$ to $0.81$ on the dedicated Qwen
negation run. On Qwen the real-clinical lower-tail lift is accompanied by
a higher reconstruction error in the perturbation-pair domain, with
per-family normalised mean squared error rising from about
$0.006$--$0.008$ to about $0.040$--$0.044$, while general OpenWebText
reconstruction is unchanged. These cases share one explanation. With
frozen general checkpoints, tasks that are saturated in the Standard code
or carried by a trivially decodable surface token leave no accessibility
gap to close. The gains we report are therefore specific to endpoints with
demonstrated headroom, and $V_{\mathrm{Gini}}$ is not a standalone
certificate of downstream separability. Two preflight checks give the same
reading before any focused training. A numeric orientation stress test
finds no clean non-saturated regime on Qwen~2.5
(Appendix~\ref{app:stress}), and a severity headroom preflight finds the
severity family already saturated in the Standard code on all three models
(Appendix~\ref{app:preflight}). These boundary cases delimit where the
method helps, and they leave the broader reading unchanged. The
perturbation-response hierarchy introduced in \S\ref{sec:measure} makes
representational blind spots formally definable and their sampled
operational signatures measurable, while the differentiable objective
conditions its finite response-profile imbalance component. The gains
concentrate where the representation has room to improve.

Beyond this empirical scope, the framework connects naturally to cognitive
science. Human
perceptual blind spots, including change blindness and inattentional
blindness \citep{b_simons}, can be interpreted through the same algebraic
lens. Input changes that fall in the stabiliser of the current attentional
state remain undetected. Looking forward, SAE training should be viewed as
a multi-objective problem over reconstruction, sparsity, perturbation
awareness, and compressibility. More broadly, turning representational
blind-spot analysis from an informal intuition into a computable hierarchy
with a differentiable imbalance component is a precondition for
automation. An algorithm cannot search for
blind spots that are not first defined as a computable quantity. By
specifying response-profile imbalance as a differentiable training signal,
this work makes perturbation awareness actionable in SAE training. Future
methods could in principle discover and audit relevant weak directions
without relying on a perturbation family supplied by hand, as is done
here.
Mapping the surface $V(\lambda_2, d_{\mathrm{sae}}, \mathrm{scale})$ and
determining the orientation cost $m^*$ for broader perturbation taxonomies
are natural next steps.

%=======================================================================
\section{Conclusion}
\label{sec:conclusion}

Representational blind spots in language models can be reduced by
conditioning the SAE feature basis, without retraining the base model. A
single differentiable regularisation term, optimising the Gini coefficient
of perturbation responses, reduces response-profile imbalance and lifts
weak perturbation directions at fixed SAE width. Across GPT-2, Gemma~2,
and Qwen~2.5, the frozen multi-seed protocol shows consistent lower-tail
lifting and imbalance reduction. On GPT-2 and Qwen~2.5, the sixteen-family
joint runs drive the mean $V$ across additional medical-domain perturbation
families to a low-imbalance regime. The result is not that one metric
settles downstream safety, but that response-profile conditioning provides
an actionable route toward perturbation awareness.

%=======================================================================
\subsubsection*{Broader Impact Statement}

This work targets a safety-relevant failure mode of language models used
in clinical and other high-stakes settings. The relevant failure is weak
or uneven representational separation for input changes that matter for
the task, such as numeric dosages, negations, or unit-of-measure
substitutions. The $V$ measure and its regularisation are intended to make
sensitivity imbalance measurable and reducible, and we expect the primary
impact to be positive. The method can help developers inspect and reduce
response-profile vulnerabilities before deployment. The $V$ profile should
complement downstream clinical validation and human oversight. In
particular, $V_{\mathrm{Gini}}$ measures within-family response uniformity
rather than absolute task detectability. A system that is uniformly
insensitive across a safety-relevant family could still have low
$V_{\mathrm{Gini}}$, so any certification use would require
absolute-sensitivity thresholds or downstream validation in addition to
the Gini profile. The perturbation families used here are
synthetic and English-language; deployment in other languages or clinical
sub-domains requires deriving family-specific perturbation sets for that
setting.

%=======================================================================
\phantomsection
\section*{Reproducibility and supplementary material}
\label{sec:repro}

This paper is self-contained. The main text states all settings needed to
reproduce the main results, and Appendices~\ref{app:diff}--\ref{app:families}
give the full derivations, family-geometry controls, reporting
conventions, protocol provenance, per-family and per-axis tables, and
example sentence pairs for all sixteen perturbation families.

As supplementary material, we provide the datasets and splits
used for the perturbation probes, the machine-readable result files behind
the reported tables and figures, the protocol manifests for the frozen
multi-seed experiments, and the code used for cache building, training,
evaluation, template-cluster resampling, and null-calibrated analyses. The
supplementary archive also includes the OpenI-derived minimal-pair splits
used for the external checks and a manifest mapping result files to the
tables and figures in the paper. The pipeline uses PyTorch and the
HuggingFace \texttt{transformers} library and reproduces on a single
consumer GPU. Activation caches, feature caches, and SAE checkpoints are
regenerated by the scripts on first run and are therefore not bundled.
The code, frozen evaluation data, machine-readable results, and provenance
manifest are archived at
\url{https://github.com/csabibme/blind_spot_reduction_SAE_TMLR}, with a
fixed camera-ready snapshot as release \texttt{v1.0.0}.

\section*{AI Declaration}

During the preparation of this work Anthropic Claude was used to assist
in refining the linguistic and mathematical prose and in drafting the
companion training and evaluation scripts. The conceptual framework, the
results, and the experimental programme are the authors' own. All content
was reviewed and edited by the authors, who take full responsibility for
the published article.

%=======================================================================
\bibliography{main}
\bibliographystyle{tmlr}

%=======================================================================
\appendix

\section{Differentiability of \texorpdfstring{$V_{\mathrm{Gini}}$}{V-Gini}}
\label{app:diff}

Section~\ref{sec:measure-gini} derives the sorted form Eq.~\eqref{eq:gini}
from the pairwise form Eq.~\eqref{eq:gini_pairwise} through the
one-sided signed-rank coefficient $2i-(K+1)$ and the factor of two from
double-counting unordered pairs. We record here the boundary behaviour the derivation assumes. The
analytical $V_{\mathrm{Gini}}$ is defined for every nonnegative profile
with $S=\sum_i D_i>0$. At the zero profile we assign
$V_{\mathrm{Gini}}=0$ by convention; because the metric is
scale-invariant, this assignment is not a continuous extension. For fixed
$K$ the analytical metric lies in $[0,(K-1)/K]$: zero is attained on every
nonzero equal profile, and the upper endpoint is attained when all response
mass is concentrated on a single transformation. Because the responses
are nonnegative by construction, no sign correction is needed.

On every region with a fixed strict ordering of the responses,
$V_{\mathrm{Gini}}$ is a smooth linear-fractional function of the
unsorted responses $D_1,\ldots,D_K$. At ties the classical gradient need
not be unique, although ties do not change the value of the statistic.
Our implementation uses \texttt{torch.sort}; its backward pass propagates
gradients through the returned permutation and thereby selects one
order-consistent branch.

For the analytical metric, on a strict-ordering region with $S>0$, the
gradient with respect to the unsorted response $D_j$ is
\begin{equation}
\frac{\partial V_{\mathrm{Gini}}}{\partial D_j} =
  \frac{2\,\pi(j)}{K \cdot S}
  - \frac{2\sum_{i=1}^K i\,D_{(i)}}{K \cdot S^2},
\end{equation}
where $\pi(j)$ is the rank of $D_j$. The numerical training implementation
uses $S+10^{-8}$ in the denominator and clamps the result to $[0,1]$; it is
therefore an $\varepsilon$-stabilised surrogate of the analytical metric
at and near the zero profile. The separate $\varepsilon$ in
Eq.~\eqref{eq:response} stabilises each response denominator but does not
guarantee $S>0$. Away from sorting ties and ReLU boundaries, gradients
propagate through the encoder by the chain rule; at those boundaries
PyTorch applies its standard branch/subgradient conventions.

\section{Exact worst-subset interpretation of lower-tail response}
\label{app:lower-tail-guarantee}

The independently selected lower-tail endpoint has an exact finite-family
interpretation. Let $I=\{1,\ldots,n\}$ index the same equally weighted
finite family of evaluated pairs under the Standard and $V$-regularised
SAEs, and let $m=\lceil0.20n\rceil$.

\begin{proposition}[Worst-subset characterisation of the lower tail]
\label{prop:worst-subset}
For a response profile $D=(D_1,\ldots,D_n)$ with ordered entries
$D_{(1)}\leq\cdots\leq D_{(n)}$,
\begin{equation}
L_{20}(D)
=
\frac{1}{m}\sum_{j=1}^{m}D_{(j)}
=
\min_{\substack{W\subseteq I,\; |W|=m}}
\frac{1}{m}\sum_{i\in W}D_i.
\label{eq:worst-subset}
\end{equation}
\end{proposition}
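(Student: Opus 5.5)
The first equality is the definition of $L_{20}(D)$ as the mean of the smallest quintile, with $m=\lceil 0.20n\rceil$ fixing the quintile's cardinality. Ties cause no ambiguity: the multiset $\{D_{(1)},\ldots,D_{(m)}\}$ is the same for every sorting permutation, so this sum is well defined. The substance is the second equality, which I would prove by two inequalities.

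\emph{Attainment.} Fix any permutation $\sigma$ of $I$ with $D_{\sigma(1)}\le\cdots\le D_{\sigma(n)}$, so that $D_{(j)}=D_{\sigma(j)}$. Put $W_0=\{\sigma(1),\ldots,\sigma(m)\}$. Then $|W_0|=m$ and
\[
\frac{1}{m}\sum_{i\in W_0}D_i=\frac{1}{m}\sum_{j=1}^m D_{(j)}.
\]
Hence the minimum is at most $L_{20}(D)$. Since there are finitely many subsets of size $m$, the minimum exists.

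\emph{Lower bound.} Let $W\subseteq I$ be any subset with $|W|=m$. List its elements in increasing order of response, $D_{w_1}\le\cdots\le D_{w_m}$. The key claim is $D_{w_j}\ge D_{(j)}$ for each $j$. To see it, note that the $j$ indices $w_1,\ldots,w_j$ all have response at most $D_{w_j}$. So at least $j$ entries of $D$ are $\le D_{w_j}$. By the definition of order statistics, the $j$-th smallest value then satisfies $D_{(j)}\le D_{w_j}$. Summing over $j=1,\ldots,m$ and dividing by $m$ gives $\frac1m\sum_{i\in W}D_i\ge L_{20}(D)$. Combining this with the attainment step proves equation~\eqref{eq:worst-subset}.

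The main obstacle is making the termwise comparison $D_{w_j}\ge D_{(j)}$ rigorous in the presence of ties. The counting argument above does this, so it is the step I would write out carefully. An alternative is an exchange argument: if $W$ contains some index outside a fixed $W_0$, then $W_0$ must contain an index missing from $W$ with response no larger, and swapping them never increases the sum; after finitely many swaps $W$ becomes $W_0$. I would also remark that the proof uses no nonnegativity of $D$ and no relation between the Standard and $V$-regularised profiles. The characterisation therefore applies to each profile separately on the common index set $I$, which is the form in which the appendix uses it.
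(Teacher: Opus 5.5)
Your proof is correct and follows the same route as the paper. The paper simply asserts that the smallest size-$m$ sum is obtained by taking the $m$ smallest entries, and notes that ties change only the minimiser, not the minimum; your counting argument for $D_{w_j}\geq D_{(j)}$ makes explicit the minimality step the paper leaves implicit.
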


\begin{proof}
Among all subsets of cardinality $m$, the smallest possible sum is
obtained by selecting the $m$ smallest entries of $D$; division by $m$
gives the result. With ties the minimiser need not be unique, but the
minimum value is unchanged.
\end{proof}

\begin{corollary}[Standard-defined weak-set guarantee]
\label{cor:standard-weak}
Let $W_S$ be any size-$m$ subset attaining the Standard lower-tail
minimum. Then
\[
\overline D_S(W_S)=L_{20}(D_S),
\qquad
\overline D_V(W_S)\geq L_{20}(D_V),
\]
and consequently
\begin{equation}
\overline D_V(W_S)-\overline D_S(W_S)
\geq
L_{20}(D_V)-L_{20}(D_S).
\label{eq:standard-weak-guarantee}
\end{equation}
\end{corollary}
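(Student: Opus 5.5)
The corollary follows directly from Proposition~\ref{prop:worst-subset}, applied once to each of the two response profiles on the common index set $I$. Throughout, $\overline D_S(W)$ and $\overline D_V(W)$ denote the equally weighted means $\frac{1}{m}\sum_{i\in W}D_{S,i}$ and $\frac{1}{m}\sum_{i\in W}D_{V,i}$ over a subset $W\subseteq I$. Both SAEs are evaluated on the same pairs, so both profiles are indexed by $I$ and share the same $m=\lceil 0.20n\rceil$. This shared indexing is the only structural assumption the argument needs.

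\emph{First identity.} Apply Eq.~\eqref{eq:worst-subset} to $D_S$. By definition, $W_S$ is a size-$m$ subset attaining the minimum on the right-hand side. Its mean $\overline D_S(W_S)$ is therefore the minimum value, which the proposition identifies with $L_{20}(D_S)$. If several subsets attain the minimum because of ties, the proposition's proof notes that the minimum value is unchanged, so the identity holds for any choice of $W_S$.

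\emph{Inequality.} Apply Eq.~\eqref{eq:worst-subset} to $D_V$. The set $W_S$ is a feasible competitor in the minimisation: it is a subset of $I$ of cardinality $m$. Its $V$-mean is therefore at least the minimum, which gives $\overline D_V(W_S)\geq L_{20}(D_V)$. This holds even though $W_S$ was chosen using the Standard profile, because the minimum is taken over all size-$m$ subsets regardless of how they were selected.

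\emph{Combination.} Subtract the equality $\overline D_S(W_S)=L_{20}(D_S)$ from the inequality. This yields Eq.~\eqref{eq:standard-weak-guarantee} directly.

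No step is a genuine obstacle. The only point needing care is that both profiles live on the same index set with the same $m$. Otherwise $W_S$ would not be admissible in the $V$-minimisation, so I would state this shared-indexing assumption explicitly at the start of the proof.
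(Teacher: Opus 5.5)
Your proof is correct and is the argument the paper intends: the corollary is stated as an immediate consequence of Proposition~\ref{prop:worst-subset}, since $W_S$ attains the minimum for $D_S$ and is an admissible competitor in the minimisation for $D_V$. The shared index set $I$ with common $m=\lceil 0.20n\rceil$ is exactly the setup the paper fixes before the proposition, and your tie remark is harmless but unnecessary, since the first identity holds for any minimiser by definition.
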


Thus, whenever the independently computed relative lower-tail endpoint
increases, the mean relative response on a Standard-defined minimising
weak set must increase by at least the reported own-tail lift. This is a
subset-average guarantee, not a pointwise one: it does not imply
$D_i^V>D_i^S$ for every $i\in W_S$. It also concerns point estimates on
the same finite universe; uncertainty intervals for the own-tail and
fixed-set estimands retain their stated resampling procedures.

\paragraph{Relation to the normalised response profile.}
For a nonzero response profile, let $p_i=D_i/\sum_jD_j$ and let
$\Lambda_k=\sum_{i=1}^k p_{(i)}$ denote the cumulative share of the $k$
smallest normalised responses (distinct from the coverage statistic
$C_\gamma$). The standard discrete Lorenz--Gini identity
\citep{cowell2011} gives
\begin{equation}
V_{\mathrm{Gini}}
=
\frac{n-1}{n}
-
\frac{2}{n}\sum_{k=1}^{n-1}\Lambda_k.
\label{eq:lorenz-gini}
\end{equation}
Hence a reduction in $V_{\mathrm{Gini}}$ is exactly an increase in the
aggregate cumulative lower-tail share of the normalised response
profile. This aggregate identity does not imply pointwise Lorenz
dominance or identify which perturbation pair carries a response.

\paragraph{Standard-defined OpenI weak sets.}
Table~\ref{tab:openi-fixed-weak} evaluates both SAEs on the same
Standard-defined weak pairs. The relative point estimates numerically
verify Corollary~\ref{cor:standard-weak} in all eight cells and are not
independent evidence beyond the own-tail endpoint. The actual values,
pair-level fractions, and conditional intervals are reported for
transparency. The absolute $\|\Delta z\|$ endpoint on the relative-$D$
weak set is not implied by the corollary and remains separate empirical
evidence.

\begin{table*}[t]
\centering
\caption{Standard-defined OpenI weak-set evaluation. $m=|W_S|$;
$\Delta L_{20}(D)$ uses independently selected own tails, while
$\Delta\overline D(W_S)$ evaluates the fixed Standard weak set.
Intervals are conditional paired 95\% bootstrap intervals on the fixed
set. ``$D_i\uparrow$'' is the fraction of its pairs with larger relative
response under $V$-regularisation. The final column is the absolute
code-displacement change on the same relative-$D$ weak set.}
\label{tab:openi-fixed-weak}
\scriptsize
\setlength{\tabcolsep}{3pt}
\begin{tabular}{@{}llrrrcr@{}}
\toprule
Model & Family & $m$ & $\Delta L_{20}(D)$ &
$\Delta\overline D(W_S)$ [95\% CI] & $D_i\uparrow$ &
$\Delta\overline{\|\Delta z\|}(W_S)$ [95\% CI] \\
\midrule
GPT-2 & negation & 60 & $+0.004215$ & $+0.004736$ [$+0.003720,+0.005731$] & 0.917 & $+0.6337$ [$+0.5151,+0.7537$] \\
GPT-2 & laterality & 60 & $+0.000945$ & $+0.000995$ [$+0.000886,+0.001104$] & 0.983 & $+0.1202$ [$+0.1053,+0.1348$] \\
GPT-2 & severity & 60 & $+0.003231$ & $+0.004157$ [$+0.003558,+0.004791$] & 1.000 & $+0.5350$ [$+0.4577,+0.6183$] \\
GPT-2 & anat.\ dir. & 27 & $+0.000807$ & $+0.000949$ [$+0.000771,+0.001149$] & 1.000 & $+0.1161$ [$+0.0954,+0.1388$] \\
\midrule
Qwen~2.5 & negation & 16 & $+0.009887$ & $+0.009887$ [$+0.008415,+0.011205$] & 1.000 & $-0.1080$ [$-0.1963,-0.0271$] \\
Qwen~2.5 & laterality & 16 & $+0.000888$ & $+0.000960$ [$+0.000769,+0.001181$] & 1.000 & $-0.0066$ [$-0.0132,-0.0008$] \\
Qwen~2.5 & severity & 16 & $+0.003506$ & $+0.003601$ [$+0.002771,+0.004417$] & 1.000 & $-0.0029$ [$-0.0295,+0.0266$] \\
Qwen~2.5 & anat.\ dir. & 16 & $+0.000556$ & $+0.000562$ [$+0.000408,+0.000705$] & 0.875 & $-0.0015$ [$-0.0051,+0.0021$] \\
\bottomrule
\end{tabular}
\end{table*}

The absolute endpoint is positive with an interval excluding zero in all
four GPT-2 cells. On Qwen it is negative with an interval excluding zero
for negation and laterality, and unresolved for severity and anatomical
direction. We therefore describe the OpenI headline as a relative
lower-tail lift and do not claim a general absolute repair.

\paragraph{Cross-fitted Qwen semantic accessibility.}
The Qwen analysis uses all 78 frozen negation pairs
in a nested five-fold cross-fit, keeps template groups intact, and
selects the 16 weak pairs once by Standard $D_i$.
Table~\ref{tab:qwen-fixed-probe} reports both that set and the remaining
62 pairs. Low Standard $D_i$ identifies a lower Standard probability
margin: the weak-minus-non-weak difference is $-0.287$
[$-0.509,-0.066$]. The error endpoint does not show the same ordering,
reflecting its near-ceiling discreteness. Because Standard $D_i$ and the
Standard margin share a representation source, this association is
descriptive rather than an independent causal test.

\begin{table}[t]
\centering
\caption{Cross-fitted Qwen negation probe on the fixed Standard-defined
weak set and remaining pairs. Margin intervals are template-clustered
95\% intervals for the paired $V$-regularised-minus-Standard change.}
\label{tab:qwen-fixed-probe}
\small
\begin{tabular}{@{}lrrrrr@{}}
\toprule
Subset & $n$ & Standard margin & V-reg margin & $\Delta$ margin [95\% CI] &
Error S$\to$V \\
\midrule
Standard weak 20\% & 16 & 0.302 & 0.358 & $+0.055$ [$-0.087,+0.257$] & $1/16\to0/16$ \\
Remaining 80\% & 62 & 0.590 & 0.575 & $-0.014$ [$-0.123,+0.118$] & $7/62\to2/62$ \\
\bottomrule
\end{tabular}
\end{table}

On the weak set the single discordant correctness transition is
$V$-reg-only correct and the exact two-sided McNemar value is $p=1.00$.
On the remaining set the corresponding value is $p=0.180$. Thus low
Standard response is associated with lower continuous semantic margin,
but the Standard-to-$V$-regularised margin change on those same weak
pairs remains statistically unresolved. Separate representation-specific
probes also make this a representation-plus-readout comparison rather
than a pure geometric intervention.

\section{Real-clinical lower-tail audit and qualitative case study}
\label{app:casestudy}

This appendix supports the real-clinical audit of
\S\ref{sec:res-e3} with the per-family table and a qualitative view of
what a weak response looks like on concrete minimal pairs. The evaluation
uses the frozen sixteen-family general checkpoints, zero-shot, on OpenI
chest radiography reports.

\paragraph{What a blind spot looks like.}
In a sampled evaluation, a clinical blind spot would appear as a
task-relevant minimal pair for which the SAE code barely moves. Consider a
negation pair and a laterality pair drawn from the OpenI reports:
\begin{quote}
\emph{Affirmed.} ``There is a small pleural effusion.''\\
\emph{Negated.} ``There is no pleural effusion.''
\end{quote}
\begin{quote}
\emph{Left.} ``Opacity in the left lower lobe.''\\
\emph{Right.} ``Opacity in the right lower lobe.''
\end{quote}
Under the Standard SAE code, the weakest responses in these families sit
low. The lower-tail response $L_{20}(D)$ is $0.030$ for negation and
$0.0034$ for laterality on Qwen~2.5, and $0.016$ and $0.0024$ on GPT-2.
These weak-tail values are consistent with the code-space signature of a
blind spot, meaning that for the hardest pairs in each family, the code
moves little when the clinical meaning flips.

\paragraph{How the regulariser changes it.}
On the same frozen checkpoints the $V$-regularised code raises the weak
tail in every family and on both models
(Table~\ref{tab:rclt}). The lower tail rises in all eight model-by-family
cells, and the report-clustered interval excludes zero in each. The
largest lift is on negation for Qwen~2.5, where $L_{20}(D)$ moves from
$0.030$ to $0.040$.

\begin{table}[t]
\centering
\caption{Real-clinical lower-tail response on OpenI minimal pairs, frozen
general checkpoints, zero-shot. $L_{20}(D)$ is the mean relative code
displacement over the weakest quintile of pairs. Intervals are
report-clustered 95\% intervals of $\Delta L_{20}$.}
\label{tab:rclt}
\small
\begin{tabular}{@{}llccc@{}}
\toprule
Model & Family & $L_{20}$ Standard & $L_{20}$ V-reg & $\Delta L_{20}$ [95\% CI]\\
\midrule
GPT-2 & negation      & 0.0157 & 0.0200 & $+0.0042$ [$+0.0030,+0.0053$]\\
GPT-2 & laterality    & 0.0024 & 0.0034 & $+0.0009$ [$+0.0008,+0.0011$]\\
GPT-2 & severity      & 0.0044 & 0.0077 & $+0.0032$ [$+0.0027,+0.0038$]\\
GPT-2 & anat.\ dir.   & 0.0017 & 0.0025 & $+0.0008$ [$+0.0007,+0.0010$]\\
\midrule
Qwen~2.5 & negation   & 0.0298 & 0.0397 & $+0.0099$ [$+0.0081,+0.0123$]\\
Qwen~2.5 & laterality & 0.0034 & 0.0043 & $+0.0009$ [$+0.0007,+0.0012$]\\
Qwen~2.5 & severity   & 0.0117 & 0.0153 & $+0.0035$ [$+0.0026,+0.0046$]\\
Qwen~2.5 & anat.\ dir.& 0.0021 & 0.0027 & $+0.0006$ [$+0.0004,+0.0008$]\\
\bottomrule
\end{tabular}
\end{table}

\paragraph{The lift is not perfectly rank-aligned with the trained statistic.}
The lower-tail response and the trained inequality statistic are not
perfectly rank-aligned on this data. On Qwen~2.5 the largest lower-tail lift,
negation at $+0.0099$, accompanies the smallest change in
$V_{\mathrm{Gini}}$, from $0.427$ to $0.424$. On GPT-2 the largest drop in
$V_{\mathrm{Gini}}$, anatomical direction from $0.644$ to $0.588$,
accompanies the smallest lower-tail lift, $+0.0008$. The rank order of the
lower-tail lift does not follow the rank order of the inequality change,
so the observed magnitude change is not a restatement of the rank order
of the shape statistic. The scale-invariant objective does not itself
reward a larger mean or absolute response, although the coupled SAE
parameter update can change both indirectly.

\paragraph{Scope.}
The analysis stays at the level of the response profile, since no
feature-activation study is included, so it supports no claim about
monosemanticity of individual units or about a universal downstream gain. On
Qwen~2.5 the lower-tail lift in this domain is accompanied by a higher
perturbation-pair reconstruction error, reported in the scope paragraph of
\S\ref{sec:discussion}. The case study establishes the narrower and
verifiable point that the Standard-defined weak sets receive larger mean
relative responses under $V$-regularisation on real clinical text. This
set-level result does not imply that every individual weak pair improves,
and the absolute endpoint is mixed on Qwen as reported in
Table~\ref{tab:openi-fixed-weak}.

%=======================================================================
\section{Dedicated paired experiment}
\label{app:dedicated}

The dedicated experiment isolates the effect of the $V$ term with a
paired common-randomness design. For each model, a Standard and a
$V$-regularised SAE are trained from identical initialisation, an
identical reconstruction minibatch stream, and identical data order,
differing only in the presence of the $V$ term with $\lambda_2=0.2$ on the
negation family. Three outcomes summarise the run. First, the phenomenon
is isolated, since the trained family's response inequality collapses from
$0.233$ to $0.0009$ on GPT-2 and from $0.113$ to $0.0008$ on Qwen~2.5, a
reduction of about $99$ percent in both cases. Second, general
reconstruction is unharmed, since explained variance on held OpenWebText
tokens is unchanged at $0.9997$ on GPT-2 and rises from $0.9950$ to
$0.9958$ on Qwen~2.5. Third, the run exposes the saturation boundary
reported in the scope paragraph of \S\ref{sec:discussion}, since the
Standard Qwen code already separates the negation family at an AUROC of
$1.00$ and the focused single-family objective reduces held-out linear
separability to $0.81$. The dedicated experiment therefore serves as a
phenomenon and no-harm isolation, while the claim-carrying downstream
evidence comes from the endpoints of \S\ref{sec:res-e3}, where the
Standard code has headroom.

%=======================================================================
\section{Controlled non-saturated experiment}
\label{app:toy}

The synthetic experiment constructs a regime in which the task
distinction is present in the input representation but not saturated in
the Standard code, so that headroom exists by design. Inputs are drawn
from a linear generative model in which a designated task direction
carries a controlled fraction of the input variance, governed by a
strength parameter $\alpha$, and paired Standard and $V$-regularised SAEs
are trained with common randomness as in Appendix~\ref{app:dedicated}. A
fixed, rule-based operating point selects the regime before evaluation,
with the rule that the Standard held-out probe sits below an AUROC of
$0.95$ while input-level recoverability stays above $0.99$.

At the selected point the $V$-regularised code gives a held-out probe
AUROC gain of $+0.016$ over the Standard code, with a seed-level standard
error of comparable size, together with a lower-tail displacement gain
$\Delta L_{20}(\|\Delta z\|)$ of $+0.51$ and a reconstruction ratio within
$9$ percent of the Standard run. Sweeping $\alpha$ toward stronger task
signal drives the Standard probe to ceiling and the probe gain to zero,
reproducing in a controlled setting the headroom dependence observed on
the real models. The experiment is reported as a mechanism check rather
than as a headline result, since the probe gain is modest, and it is read
together with the real-model endpoints of \S\ref{sec:res-e3}.

%=======================================================================
\section{Held-out family generalisation audit}
\label{app:unseen}

The audit tests whether the lower-tail lift is specific to the sixteen
training families or a property of the learned basis. The evaluation draws
on a pre-constructed pool of one hundred perturbation families with thirty
fixed pairs each. The pool metadata records a creation date of 25~May
2026, and the held-out status is checkable from the released materials
independently of any timestamp, since the eighty-four held-out families
appear in no training script, cache manifest, or reported training result
of this paper. Sixteen
entries of the pool are the training families themselves,
and the remaining eighty-four are template-generated held-out families.
The frozen general checkpoints are evaluated zero-shot on this held-out
part, comprising two tiers. Tier~A
contains unseen instances of seen family kinds, and Tier~B contains
eighteen family kinds with no mechanical overlap with any training
family, including unit reformatting, temporal shifts, dose-route changes,
and word-order alternations. Families are subsampled to thirty pairs with
per-family fixed seeds, and intervals are pair-level bootstrap intervals.

Six template families are exact duplicates of another pool entry and are
excluded. Seven negation shards in the pool share their weakest pairs, so
their lower-tail statistics coincide, and they are counted as one family
in the macro summaries. After these exclusions, Tier~A comprises
fifty-four families. The lower-tail lift generalises in both tiers. On Tier~A the
macro mean $\Delta L_{20}$ is $+0.016$ on GPT-2, $+0.036$ on Qwen~2.5, and
$+0.026$ on Gemma~2, with the interval excluding zero for $91$, $89$, and
$100$ percent of families respectively, and with every Gemma~2 family
individually positive. On Tier~B, the structurally novel tier, the macro
mean is $+0.026$ on GPT-2, $+0.040$ on Qwen~2.5, and $+0.033$ on Gemma~2,
with every GPT-2 and Gemma~2 family individually interval-positive. The
families that fail to reach interval positivity concentrate where the
scope paragraph of \S\ref{sec:discussion} predicts, on Qwen~2.5
surface-numeric shards. In this out-of-domain perturbation set the
$V$-regularised code shows a higher perturbation-pair reconstruction
error, most pronounced on Qwen~2.5, consistent with the reconstruction
cost reported for the real-clinical domain, while general OpenWebText
reconstruction is unaffected.

%=======================================================================
\section{Dose-value probe and the surface-token ceiling}
\label{app:dosage}

This appendix supports the dose-value discussion in the scope paragraph of
\S\ref{sec:discussion}. A linear probe reads the numeric dose value from
each representation on a held test split. In the hidden state the value is
fully decodable, at an AUROC of $1.00$ on all three models, and the
Standard SAE code preserves this, also at $1.00$. The $V$-regularised code
is slightly worse on the value-distance endpoint, with a test
$\Delta$AUROC of $-0.099$ on GPT-2, $-0.039$ on Gemma~2, and $-0.156$ on
Qwen~2.5. The reading is that the digit sits on the surface of the text
and is linearly decodable regardless of the SAE objective, so the
dose-value distance probe has no headroom and is not the informative
downstream test. The representation-level weak tail on the numeric family
is lifted by the objective in the same way as the other families, as the
held-out family audit shows (Appendix~\ref{app:unseen}); the ceiling is a
property of the readout, not of the code response.

%=======================================================================
\section{Numeric orientation stress test}
\label{app:stress}

The stress test asks whether a non-saturated numeric regime can be found
on the strongest model, so that a focused objective would have room to
act. On Qwen~2.5 we score raw next-token log-probability margins on
numeric minimal pairs, split into an easy regime and a trap regime
designed to require the numeric distinction. The model handles both
without a clean weak regime. Accuracy is $0.58$ on the easy split and
$0.87$ on the trap split, and the lower-tail correct-margin remains near
or above zero across families, so no clean non-saturated numeric regime
is available on this model for a focused intervention. This null informs
the choice, in the controlled experiment of Appendix~\ref{app:toy}, to
construct headroom synthetically rather than search for it in a saturated
numeric task.

%=======================================================================
\section{Severity headroom preflight}
\label{app:preflight}

Before committing a focused severity family, we measured the headroom
available in each model on a severity-change probe with per-seed
resampling. The hidden state already separates severity at an AUROC of
$0.92$ on GPT-2, $1.00$ on Gemma~2, and $0.97$ on Qwen~2.5, and the
Standard SAE code is at or near this level. A focused $V$-regularised code
does not improve on it, with a gate $\Delta$AUROC of $-0.156$ on GPT-2,
$-0.061$ on Gemma~2, and $-0.017$ on Qwen~2.5. The preflight therefore
predicts, before any focused training, that severity is a saturated target
on frozen general checkpoints, which is consistent with the general
finding that gains concentrate where the Standard code has headroom.

%=======================================================================
\section{Certificate-gated laterality repair}
\label{app:repair}

As a proof of mechanism we test whether a targeted repair, applied only
to a certified laterality direction, changes downstream separation on the
external OpenI left/right minimal pairs. On the same report-clustered
protocol as the external boundary check, the zero-shot $V$-regularised
code and a targeted-repair code give small report-cluster deltas with
intervals that include zero, for example a lower-tail probability-margin delta
of $+0.0022$ with interval $[-0.035, +0.030]$ for the zero-shot code on
GPT-2. The repair moves the intended direction without a measurable
downstream gain on this saturated external task, which is consistent with
the boundary reading in \S\ref{sec:res-e3}. The experiment is reported as
a mechanism probe rather than as evidence of downstream improvement.

%=======================================================================
\section{Multi-factor SAE architecture}
\label{app:multifactor}

The multi-factor SAE mirrors the multi-check-digit construction $D_n^m$
from the error-detecting-code framework \citep{b_general}, in which
$m$ independent check functions over an alphabet of size $n$ jointly
detect a wider class of errors than any single function. It consists of
$m$ parallel encoders, each mapping the hidden
state $h \in \mathbb{R}^{d_{\mathrm{in}}}$ to a factor code
$z^{(k)} \in \mathbb{R}^{d_{\mathrm{factor}}}$, and a shared decoder
mapping the concatenation $z = [z^{(1)};\ldots;z^{(m)}]$ back to
$\hat{h}$. Total capacity is held constant at
$d_{\mathrm{factor}} \times m = d_{\mathrm{sae}}$, so the comparison
between $m{=}1$ and $m{>}1$ isolates the effect of factorisation at fixed
capacity. Families are assigned to factors round-robin
($i \bmod m$); the $V$ loss for factor $k$ is computed on $z^{(k)}$ only,
while evaluation computes $V_{\mathrm{Gini}}$ on the concatenated code. In
the six-family descriptive analysis, a single encoder regularised on all
families reaches the low-imbalance regime. The architecture was tested at
$m = 1,2,3,6$ to confirm the $m{=}1$ result is not an artefact of
the specific factorisation. (An orthogonality penalty
$\lambda_{\mathrm{ortho}}\|{W^{(j)}}^\top W^{(k)}\|_F^2$ encourages the
factors to span complementary subspaces.)

\section{Single-run objective profiles}
\label{app:objective-comparison}

The single-run family-level comparison below supports the multi-seed
baseline extension in Table~\ref{tab:e7b}. It is retained as numerical
detail because it reports every evaluated perturbation family and includes
the input-normalised $V$ variant.

\begin{table}[h]
\centering
\caption{$V_{\mathrm{Gini}}$ (raw) across five SAE methods and three
models (5{,}000 steps, $d_{\mathrm{sae}} = 4{,}096$, 50 pairs per
family). Bold indicates the lowest $V$ per family per model. The
standard, JumpReLU, and MDL SAEs produce similar single-run profiles; the
$V$-regularised SAE reduces response-profile imbalance in this
descriptive comparison.}
\label{tab:comparison}
\small
\begin{center}
\begin{tabular}{@{}ll*{5}{c}@{}}
\toprule
Model & Family & Std & Jump & MDL & V-reg & V-reg$_{\mathrm{n}}$ \\
\midrule
\multirow{6}{*}{\rotatebox{90}{GPT-2}}
& Negation      & .219 & .222 & .223 & \textbf{.000} & .267 \\
& Typo          & .307 & .306 & .309 & \textbf{.014} & .295 \\
& Synonym       & .307 & .310 & .310 & \textbf{.018} & .326 \\
& Number swap   & .264 & .265 & .266 & \textbf{.034} & .276 \\
& Semantic sub. & .435 & .433 & .435 & \textbf{.045} & .400 \\
& Word order    & .281 & .282 & .282 & \textbf{.044} & .302 \\
\midrule
\multirow{6}{*}{\rotatebox{90}{Gemma 2}}
& Negation      & .107 & .106 & .111 & \textbf{.000} & .188 \\
& Typo          & .226 & .214 & .219 & \textbf{.000} & .308 \\
& Synonym       & .214 & .215 & .219 & \textbf{.000} & .241 \\
& Number swap   & .305 & .301 & .304 & \textbf{.156} & .469 \\
& Semantic sub. & .381 & .375 & .382 & \textbf{.290} & .468 \\
& Word order    & .226 & .213 & .220 & \textbf{.000} & .189 \\
\midrule
\multirow{6}{*}{\rotatebox{90}{Qwen 2.5}}
& Negation      & .136 & .131 & .133 & \textbf{.000} & .149 \\
& Typo          & .160 & .158 & .162 & \textbf{.000} & .144 \\
& Synonym       & .234 & .220 & .220 & \textbf{.002} & .177 \\
& Number swap   & .304 & .308 & .306 & \textbf{.005} & .218 \\
& Semantic sub. & .369 & .367 & .370 & \textbf{.001} & .373 \\
& Word order    & .105 & .103 & .098 & \textbf{.000} & .100 \\
\bottomrule
\end{tabular}
\end{center}
\end{table}

\section{Mean sensitivity control}
\label{app:sensitivity-control}

The per-family mean sensitivity values below check that the low-imbalance
solution is not obtained by suppressing all perturbation responses.

\begin{table}[h]
\centering
\caption{Mean relative sensitivity per family (standard vs
$V$-regularised SAE). The $V$-regularisation increases mean sensitivity
on most families across all three models, confirming that it
redistributes perturbation signals more uniformly rather than
suppressing them.}
\label{tab:sens}
\begin{center}
\small
\begin{tabular}{@{}llccc@{}}
\toprule
Model & Family & Standard & V-reg & Change \\
\midrule
\multirow{6}{*}{\rotatebox{90}{GPT-2}}
& Negation      & 0.073 & \textbf{0.115} & +58\% \\
& Typo          & 0.067 & \textbf{0.081} & +21\% \\
& Synonym       & 0.043 & \textbf{0.050} & +17\% \\
& Number swap   & 0.021 & 0.023 & +7\% \\
& Semantic sub. & 0.045 & 0.045 & 0\% \\
& Word order    & 0.043 & \textbf{0.057} & +34\% \\
\midrule
\multirow{6}{*}{\rotatebox{90}{Gemma 2}}
& Negation      & 0.342 & \textbf{0.614} & +79\% \\
& Typo          & 0.212 & \textbf{0.676} & +219\% \\
& Synonym       & 0.142 & \textbf{0.215} & +52\% \\
& Number swap   & 0.093 & \textbf{0.236} & +154\% \\
& Semantic sub. & 0.215 & \textbf{0.257} & +20\% \\
& Word order    & 0.224 & \textbf{0.291} & +30\% \\
\midrule
\multirow{6}{*}{\rotatebox{90}{Qwen 2.5}}
& Negation      & 0.379 & \textbf{0.507} & +34\% \\
& Typo          & 0.245 & \textbf{0.289} & +18\% \\
& Synonym       & 0.199 & \textbf{0.213} & +7\% \\
& Number swap   & 0.111 & \textbf{0.134} & +21\% \\
& Semantic sub. & 0.269 & \textbf{0.290} & +8\% \\
& Word order    & 0.302 & \textbf{0.358} & +18\% \\
\bottomrule
\end{tabular}
\end{center}
\end{table}

\section{Input-normalised gain and perturbation complexity}
\label{app:gain}

The response $D_i$ in Eq.~\ref{eq:response} does not account for the
magnitude of the input perturbation. The input-normalised gain divides the
feature-space response by the hidden-state response,
\begin{equation}
g_i = \frac{\|z_i^{\mathrm{pert}} - z_i^{\mathrm{orig}}\| /
            \|z_i^{\mathrm{orig}}\|}
           {\|h_i^{\mathrm{pert}} - h_i^{\mathrm{orig}}\| /
            \|h_i^{\mathrm{orig}}\|},
\end{equation}
and $V_{\mathrm{Gini}}$ of the $g_i$ measures whether the SAE applies a
similar amplification factor across all pairs under this normalisation.
The V-reg$_{\mathrm{n}}$
variant (trained with $\lambda_2 \cdot V_{\mathrm{gain}}$) achieves
$V_{\mathrm{gain}}$ values that round to 0.000 on all six families on all
three models (Table~\ref{tab:gain}), indicating very low gain imbalance
after input-complexity normalisation. The trade-off is expected. The two
variants optimise different objectives, with V-reg targeting the raw
response profile $D$ and V-reg$_{\mathrm{n}}$ targeting the
input-normalised profile $g$, so each minimises its own target and not
the other. Thus, V-reg$_{\mathrm{n}}$ does not minimise the raw $V$ (its
raw values are comparable to or higher than the standard SAE on some
families, Table~\ref{tab:comparison}). The two variants occupy
complementary points on the Pareto surface of perturbation awareness, and
the choice between them depends on whether raw or complexity-normalised
sensitivity is the quantity of interest.

\begin{table}[h]
\centering
\caption{$V_{\mathrm{Gini}}$ (gain) for the V-reg$_{\mathrm{n}}$ variant.
Values are rounded to three decimals and indicate very low gain imbalance
under input-complexity normalisation.}
\label{tab:gain}
\begin{center}
\small
\begin{tabular}{@{}lccc@{}}
\toprule
Family & GPT-2 & Gemma 2 & Qwen 2.5 \\
\midrule
Negation & 0.000 & 0.000 & 0.000 \\
Typo & 0.000 & 0.000 & 0.000 \\
Synonym & 0.000 & 0.000 & 0.000 \\
Number swap & 0.000 & 0.000 & 0.000 \\
Semantic sub. & 0.000 & 0.000 & 0.000 \\
Word order & 0.000 & 0.000 & 0.000 \\
\bottomrule
\end{tabular}
\end{center}
\end{table}

The sixteen families differ in input-level complexity, measured by the
coefficient of variation (CV) of hidden-state perturbation magnitudes and
by centered effective rank (Table~\ref{tab:complexity}). The full
geometry profile spans low-rank templatic families and more diffuse
medical-domain families. The fact that V-reg reaches a low-imbalance
regime despite this heterogeneity shows that it is not merely benefiting
from a homogeneous perturbation set.

\begin{table}[h]
\centering
\caption{Input perturbation complexity per family. Each entry reports
CV/centered effective rank for hidden-state perturbation magnitudes.}
\label{tab:complexity}
\begin{center}
\small
\begin{tabular}{@{}lccc@{}}
\toprule
Family & GPT-2 & Gemma 2 & Qwen 2.5 \\
\midrule
Anatomical dir. & 0.61/4.1 & 0.50/17.4 & 0.48/18.6 \\
Body part swap & 0.60/3.8 & 0.60/13.8 & 0.58/14.6 \\
Causal reversal & 0.70/5.2 & 0.50/18.9 & 0.65/16.6 \\
Condition neg. & 0.73/2.1 & 0.45/17.6 & 0.61/12.1 \\
Date/time change & 0.71/3.1 & 0.52/18.2 & 0.66/14.8 \\
Drug name swap & 0.56/4.1 & 0.32/23.0 & 0.46/18.5 \\
Frequency change & 1.13/2.6 & 0.63/14.6 & 0.71/12.6 \\
Gender swap & 0.83/2.7 & 0.29/18.1 & 0.21/20.7 \\
Negation & 0.52/2.4 & 0.20/16.3 & 0.26/15.7 \\
Number swap & 0.54/3.7 & 0.54/21.8 & 0.44/26.2 \\
Semantic sub. & 0.89/5.2 & 0.69/19.9 & 0.90/12.6 \\
Severity change & 0.85/2.9 & 0.54/18.2 & 0.69/15.3 \\
Synonym & 0.65/2.6 & 0.27/36.7 & 0.33/34.4 \\
Typo & 0.61/5.1 & 0.27/36.6 & 0.29/35.8 \\
Unit measure & 0.61/4.1 & 0.60/13.4 & 0.72/10.8 \\
Word order & 0.67/2.4 & 0.40/28.5 & 0.21/35.7 \\
\bottomrule
\end{tabular}
\end{center}
\end{table}

\section{Reconstruction quality across methods}
\label{app:recon}

\begin{table}[h]
\centering
\caption{Reconstruction MSE and sparsity (fraction of active features)
across five SAE methods and three models. All share
$d_{\mathrm{sae}} = 4{,}096$ and 5{,}000 steps.}
\label{tab:recon}
\begin{center}
\small
\begin{tabular}{@{}llcc@{}}
\toprule
Model & Method & MSE & Frac active \\
\midrule
\multirow{5}{*}{\rotatebox{90}{GPT-2}}
& Standard & 0.0034 & 0.352 \\
& JumpReLU & 0.0035 & 0.333 \\
& MDL & 0.0065 & 0.337 \\
& V-reg & 0.0069 & 0.344 \\
& V-reg$_{\mathrm{n}}$ & 0.0033 & 0.329 \\
\midrule
\multirow{5}{*}{\rotatebox{90}{Gemma 2}}
& Standard & 0.0130 & 0.072 \\
& JumpReLU & 0.0222 & 0.097 \\
& MDL & 0.0250 & 0.076 \\
& V-reg & 0.0119 & 0.094 \\
& V-reg$_{\mathrm{n}}$ & 0.0142 & 0.071 \\
\midrule
\multirow{5}{*}{\rotatebox{90}{Qwen 2.5}}
& Standard & 0.0052 & 0.172 \\
& JumpReLU & 0.0020 & 0.204 \\
& MDL & 0.0038 & 0.248 \\
& V-reg & 0.0104 & 0.162 \\
& V-reg$_{\mathrm{n}}$ & 0.0040 & 0.160 \\
\bottomrule
\end{tabular}
\end{center}
\end{table}

On GPT-2, the V-reg MSE (0.0069) is about twice the standard (0.0034), a
modest cost for a $10\times$ reduction in $V$. On Gemma~2, V-reg achieves
the lowest MSE of all five methods (0.0119 vs standard 0.0130), indicating
that perturbation-aware orientation can sometimes improve reconstruction.
On Qwen~2.5, V-reg has the highest MSE (0.0104), but the absolute level
remains low. The V-reg$_{\mathrm{n}}$ variant consistently achieves
reconstruction comparable to or better than the standard, because the
gain-normalised $V$ loss places less pressure on the encoder weights.

\section{Hidden-state extraction protocol}
\label{app:extraction}

Decoder-only language models can use different padding conventions during
batched evaluation. To compare perturbation responses at matched sequence
positions, all hidden-state caches in the evaluation use a
true-last-token extraction rule. For each sequence, the extraction index is
the final position with attention-mask value one. This rule selects the
last non-padding token independently of whether the tokenizer pads on the
left or on the right.

This convention is used for GPT-2, Gemma~2, and Qwen~2.5 perturbation
caches. It makes the response quantity in Eq.~\eqref{eq:response}
well-defined across models, because $z_i^{\mathrm{orig}}$ and
$z_i^{\mathrm{pert}}$ are computed from the same semantic sequence
position in the original and perturbed inputs. The same cached hidden
states are then reused for Standard, JumpReLU, MDL, and
$V$-regularised SAE evaluation under the frozen multi-seed protocol.

\section{Reporting rounded small values}
\label{app:nearzero}

Several tables display $V_{\mathrm{Gini}}$ values rounded to three
decimals. An entry printed as 0.000 should be read as a rounded small
value, not as a mathematical claim that the underlying response or
imbalance is exactly zero. The full-precision values are retained in the
machine-readable result files. This convention is especially important for
low-imbalance summaries. A rounded value indicates that the response
profile is close to uniform at the displayed precision; the
scale-sensitive quantities in \S\ref{sec:measure} report response
magnitude. Two factors make such values more likely. The $\varepsilon$ in
Eq.~\eqref{eq:response} stabilises small denominators and compresses very
small responses toward a common scale, and for small or internally similar
families the sorted response profile can approach uniformity, which drives
the Gini statistic below the displayed precision. Near-zero cells are
therefore accompanied by seed dispersion (Table~\ref{tab:e7}) and by the
family-geometry controls of Appendix~\ref{app:family-geometry}, so that a
near-zero $V_{\mathrm{Gini}}$ is not read as a trivial saturation effect.

\section{Family-geometry and template-cluster controls}
\label{app:family-geometry}

The perturbation set contains repeated surface templates, so uncertainty
should not rely only on pair-level resampling. We infer exact-skeleton
template clusters by replacing the changed span in each original/perturbed
pair with a slot marker. Across the 600 perturbation pairs, this procedure
finds 424 template clusters, of which 350 are singleton templates and 74
are multi-pair templates. In total, 250/600 pairs (41.7\%) belong to
repeated templates. Repeated exact surface templates occur in five
families, namely \texttt{negation}, \texttt{number\_swap},
\texttt{semantic\_substitution}, \texttt{synonym}, and \texttt{typo}. The
other eleven families are exact-skeleton singleton families, so
template-cluster resampling coincides with pair-level resampling for those
families.

Table~\ref{tab:app-template-robustness} summarises the main resampling
checks. The fixed-family intervals resample template clusters within each
family while keeping the sixteen evaluated families fixed. The
family-resampled intervals additionally resample families. The latter are
wider, as expected, but remain positive for the relative lower-tail effect
and for the absolute code-displacement lower tail.

\begin{table}[h]
\centering
\caption{Template-cluster resampling summary. LOTO denotes
leave-one-template-out, and LOFO denotes leave-one-family-out.}
\label{tab:app-template-robustness}
\begin{center}
\small
\begin{tabular}{@{}lccc@{}}
\toprule
Model & LOTO positive templates & LOFO $\Delta L_{20}(D)$ range &
Centered effective-rank range \\
\midrule
Gemma~2 & $424/424$ & $+0.1149$--$+0.1401$ & $13.4$--$36.7$ \\
GPT-2 & $424/424$ & $+0.0522$--$+0.0584$ & $2.1$--$5.2$ \\
Qwen~2.5 & $424/424$ & $+0.1140$--$+0.1372$ & $10.8$--$35.8$ \\
\bottomrule
\end{tabular}
\end{center}
\end{table}

The hidden-state perturbation geometry differs across models. GPT-2 has a
lower-dimensional and more direction-concentrated perturbation geometry
than Gemma~2 or Qwen~2.5, but its sparse-code lower-tail response still
improves under $V$-regularisation. Family-size ablations also support the
same conclusion. At $K=10$ pairs per family, all relative lower-tail
effects remain positive in all repeats for Gemma~2 and GPT-2, and the
worst Qwen~2.5 family is positive in 99.9\% of repeats. Absolute
code-distance effects are more heterogeneous at small $K$, especially for
Qwen~2.5, but the aggregate cluster-aware intervals remain positive.
Decoded-response lower-tail effects are stable for Gemma~2 and Qwen~2.5,
while GPT-2 remains near zero after decoding.

\section{Sixteen-family per-family values}
\label{app:joint16-detail}

The per-family values below provide the detailed support for the
sixteen-family summary in Table~\ref{tab:joint16}.

\begin{table}[h]
\centering
\caption{Per-family $V_{\mathrm{Gini}}$ (raw) for the sixteen-family joint
$V$-regularised SAE on GPT-2 and Qwen~2.5, standard SAE versus V-reg. The
six families above the rule are the original families; the ten below are
the auto-generated medical-domain families, none of which informed the
metric or training protocol. Both groups reach a low-imbalance regime in
this descriptive comparison. Source files
\texttt{results/joint\_eval\_\{gpt2,qwen-2.5-3b\}.json}.}
\label{tab:joint16detail}
\begin{center}
\small
\begin{tabular}{@{}l cc c cc@{}}
\toprule
& \multicolumn{2}{c}{GPT-2 small} & & \multicolumn{2}{c}{Qwen 2.5 3B} \\
\cmidrule(lr){2-3}\cmidrule(lr){5-6}
Family & Std & V-reg & & Std & V-reg \\
\midrule
Negation                & 0.219 & \textbf{0.000} & & 0.136 & \textbf{0.000} \\
Typo                    & 0.307 & \textbf{0.000} & & 0.160 & \textbf{0.000} \\
Synonym                 & 0.307 & \textbf{0.000} & & 0.234 & \textbf{0.000} \\
Number swap             & 0.264 & \textbf{0.000} & & 0.303 & \textbf{0.000} \\
Semantic substitution   & 0.435 & \textbf{0.035} & & 0.368 & \textbf{0.000} \\
Word order              & 0.281 & \textbf{0.001} & & 0.105 & \textbf{0.000} \\
\addlinespace
\textit{6-family mean}  & \textit{0.302} & \textit{0.006} & & \textit{0.218} & \textit{0.000} \\
\midrule
Anatomical direction    & 0.279 & \textbf{0.000} & & 0.225 & \textbf{0.000} \\
Body-part swap          & 0.238 & \textbf{0.000} & & 0.258 & \textbf{0.000} \\
Causal reversal         & 0.352 & \textbf{0.000} & & 0.291 & \textbf{0.000} \\
Condition negation      & 0.294 & \textbf{0.000} & & 0.192 & \textbf{0.000} \\
Date/time change        & 0.350 & \textbf{0.000} & & 0.294 & \textbf{0.000} \\
Drug-name swap          & 0.266 & \textbf{0.000} & & 0.183 & \textbf{0.000} \\
Frequency change        & 0.377 & \textbf{0.000} & & 0.341 & \textbf{0.000} \\
Gender swap             & 0.337 & \textbf{0.000} & & 0.111 & \textbf{0.000} \\
Severity change         & 0.428 & \textbf{0.000} & & 0.320 & \textbf{0.000} \\
Unit of measure         & 0.296 & \textbf{0.000} & & 0.299 & \textbf{0.000} \\
\addlinespace
\textit{10-family medical-domain mean} & \textit{0.322} & \textit{0.000} & & \textit{0.251} & \textit{0.000} \\
\midrule
\textbf{All-16 mean}    & \textbf{0.314} & \textbf{0.002} & & \textbf{0.239} & \textbf{0.000} \\
\bottomrule
\end{tabular}
\end{center}
\end{table}

\section{Gemma extraction consistency}
\label{app:gemma-provenance}

Decoder-only models differ in their padding conventions, and under
Gemma's left-padding convention a fixed-index extraction rule selects a
padding position rather than the final token, so caches built with such a
rule are not computed at the true last token. All Gemma~2 analyses in this
paper therefore use the attention-mask-based true-last-token rule of
Appendix~\ref{app:extraction}, the same rule applied to GPT-2 and
Qwen~2.5. Under this rule Gemma follows the same qualitative pattern as
the other two models in the primary response-profile metrics. An apparent
imbalance plateau that arises under fixed-index extraction on Gemma is an
artefact of the misaligned token position rather than a structural
property of the Gemma hidden state, which is why the position-level rule
matters for cross-model comparability. The scientific interpretation is
based on the true-last-token caches and the frozen cross-model evaluation
reported in Table~\ref{tab:e7}.

\section{Pre-caching of perturbation hidden states}
\label{app:lambda}

The $V$ term operates on deterministic, pre-cached perturbation hidden
states. Each perturbation pair is passed through the frozen language model
once, stored, and reused at every SAE training step. This mirrors standard
activation caching for the reconstruction loss and ensures that changes
in the perturbation-response objective come from SAE optimisation rather
than from stochastic variation in language-model forward passes. For
production use, $\lambda_2$ should be selected on a validation set via the
Pareto frontier of reconstruction MSE versus mean $V_{\mathrm{Gini}}$.

\section{Perturbation family descriptions}
\label{app:families}

The six original families contain 50 sentence pairs each; the ten
medical-domain families (used in the sixteen-family study of
\S\ref{sec:res-joint16}) contain 30 pairs each, for 600 pairs in total.
Representative examples for every family follow; the complete datasets are
provided as supplementary material (see \nameref{sec:repro}).

\subsection{Original families}

\paragraph{Semantic substitution.}
``The cat sat on the mat\ldots'' $\to$ ``The dog sat on the mat\ldots'';
``\ldots very hot and sunny\ldots'' $\to$ ``\ldots very cold and sunny\ldots'';
``\ldots drove her car\ldots'' $\to$ ``\ldots drove her truck\ldots'';
``The doctor examined\ldots'' $\to$ ``The nurse examined\ldots'';
``\ldots from the small shop'' $\to$ ``\ldots from the small store''.

\paragraph{Typo (adjacent-character transposition).}
``\ldots admitted to the hospital\ldots'' $\to$ ``\ldots hopsital\ldots'';
``\ldots published the results\ldots'' $\to$ ``\ldots publsihed\ldots'';
``\ldots an important message\ldots'' $\to$ ``\ldots mesasge\ldots'';
``\ldots their assignment\ldots'' $\to$ ``\ldots thier\ldots'';
``\ldots explained the concept\ldots'' $\to$ ``\ldots explaind\ldots''.

\paragraph{Negation.}
``The test result is positive\ldots'' $\to$ ``\ldots is not positive\ldots'';
``The drug is safe\ldots'' $\to$ ``\ldots is not safe\ldots'';
``The system is working\ldots'' $\to$ ``\ldots is not working\ldots'';
``The patient is responding\ldots'' $\to$ ``\ldots is not responding\ldots'';
``The data is consistent\ldots'' $\to$ ``\ldots is not consistent\ldots''.

\paragraph{Synonym.}
``\ldots very big\ldots'' $\to$ ``\ldots very large\ldots'';
``\ldots happy to hear\ldots'' $\to$ ``\ldots glad to hear\ldots'';
``\ldots very hard to complete\ldots'' $\to$ ``\ldots very difficult\ldots'';
``She started working\ldots'' $\to$ ``She began working\ldots'';
``\ldots fix the broken machine\ldots'' $\to$ ``\ldots repair\ldots''.

\paragraph{Number swap.}
``\ldots received 3 doses\ldots'' $\to$ ``\ldots 8 doses\ldots'';
``\ldots 5 hospitals\ldots'' $\to$ ``\ldots 9 hospitals\ldots'';
``\ldots 2 pills\ldots'' $\to$ ``\ldots 7 pills\ldots'';
``\ldots scored 90 points\ldots'' $\to$ ``\ldots 40 points\ldots'';
``\ldots reached 30 degrees\ldots'' $\to$ ``\ldots 80 degrees\ldots''.

\paragraph{Word order (adjacent-word swap).}
``The black cat\ldots'' $\to$ ``The cat black\ldots'';
``She quickly finished\ldots'' $\to$ ``She finished quickly\ldots'';
``The old man\ldots'' $\to$ ``The man old\ldots'';
``The bright sun\ldots'' $\to$ ``The sun bright\ldots'';
``The young student\ldots'' $\to$ ``The student young\ldots''.

\subsection{Additional medical-domain families}

These ten auto-generated families were not used to design the metric or
training protocol; the sixteen-family study
(\S\ref{sec:res-joint16}, Table~\ref{tab:joint16detail}) evaluates
generalisation to them. Three representative pairs per family follow.

\paragraph{Anatomical direction (anterior$\leftrightarrow$posterior, proximal$\leftrightarrow$distal, dorsal$\leftrightarrow$ventral).}
``The mass is located in the \emph{anterior} mediastinum'' $\to$
``\ldots \emph{posterior} mediastinum'';
``Pain is felt in the \emph{proximal} part of forearm'' $\to$
``\ldots \emph{distal} part\ldots'';
``The lesion is on the \emph{dorsal} surface of hand'' $\to$
``\ldots \emph{ventral} surface\ldots''.

\paragraph{Body-part swap (left$\leftrightarrow$right).}
``\ldots pain in the \emph{left} knee'' $\to$ ``\ldots \emph{right} knee'';
``Fracture detected in the \emph{right} femur'' $\to$ ``\ldots \emph{left} femur'';
``Swelling observed in the \emph{left} ankle'' $\to$ ``\ldots \emph{right} ankle''.

\paragraph{Causal reversal.}
``The fever was caused by the infection'' $\to$
``The infection was caused by the fever'';
``Pain \emph{increased} after taking the medication'' $\to$
``Pain \emph{decreased}\ldots'';
``\ldots resulted in significant \emph{improvement}'' $\to$
``\ldots significant \emph{deterioration}''.

\paragraph{Condition negation.}
``The patient \emph{has} a history of diabetes'' $\to$
``\ldots \emph{has no} history of diabetes'';
``Allergies to penicillin \emph{are} documented'' $\to$
``\emph{No} allergies\ldots documented'';
``There \emph{is} evidence of metastatic disease'' $\to$
``There \emph{is no} evidence\ldots''.

\paragraph{Date/time change.}
``\ldots scheduled for \emph{Monday} morning'' $\to$ ``\ldots \emph{Friday} morning'';
``\ldots admitted on \emph{January} 15'' $\to$ ``\ldots \emph{March} 15'';
``Follow-up in \emph{2} weeks'' $\to$ ``\ldots \emph{6} weeks''.

\paragraph{Drug-name swap.}
``\ldots prescribed \emph{metformin} daily'' $\to$ ``\ldots \emph{lisinopril} daily'';
``Start \emph{ibuprofen} 400\,mg\ldots'' $\to$ ``Start \emph{acetaminophen} 400\,mg\ldots'';
``\ldots taking \emph{warfarin} for clots'' $\to$ ``\ldots \emph{aspirin} for clots''.

\paragraph{Frequency change.}
``Take the medication \emph{once} daily'' $\to$ ``\ldots \emph{three times} daily'';
``Apply the cream \emph{twice daily}'' $\to$ ``\ldots \emph{once weekly}'';
``\ldots given every \emph{two weeks}'' $\to$ ``\ldots every \emph{two months}''.

\paragraph{Gender swap (he$\leftrightarrow$she, his$\leftrightarrow$her).}
``\ldots reported \emph{he} felt dizzy'' $\to$ ``\ldots \emph{she} felt dizzy'';
``\emph{She} was admitted to the ward'' $\to$ ``\emph{He} was admitted\ldots'';
``\emph{His} blood pressure was elevated'' $\to$ ``\emph{Her} blood pressure\ldots''.

\paragraph{Severity change (mild$\to$severe, minor$\to$major).}
``\ldots \emph{mild} chest pain'' $\to$ ``\ldots \emph{severe} chest pain'';
``\ldots \emph{moderate} difficulty breathing'' $\to$ ``\ldots \emph{extreme} difficulty\ldots'';
``\ldots \emph{slight} swelling'' $\to$ ``\ldots \emph{significant} swelling''.

\paragraph{Unit of measure (mg$\to$g, mL$\to$L, cm$\to$mm).}
``The dosage is 500\,\emph{mg}\ldots'' $\to$ ``\ldots 500\,\emph{g}\ldots'';
``Inject 0.5\,\emph{mL}\ldots'' $\to$ ``Inject 0.5\,\emph{L}\ldots'';
``\ldots measured 2.3\,\emph{cm}\ldots'' $\to$ ``\ldots 2.3\,\emph{mm}\ldots''.

\end{document}